\tikzset{%
  every neuron/.style={
    circle,
    draw,
    minimum size=1cm
  },
  neuron missing/.style={
    draw=none, 
    scale=4,
    text height=0.333cm,
    execute at begin node=\color{black}$\vdots$
  },
}
\DeclareFontFamily{U}{dutchcal}{\skewchar\font=45}
\DeclareFontShape{U}{dutchcal}{m}{n}{<-> s*[1.0] dutchcal-r}{}
\DeclareFontShape{U}{dutchcal}{b}{n}{<-> s*[1.0] dutchcal-b}{}
\DeclareMathAlphabet{\mathlcal}{U}{dutchcal}{m}{n}
\SetMathAlphabet{\mathlcal}{bold}{U}{dutchcal}{b}{n}
\def\smallint{\begingroup\textstyle \int\endgroup}
\DeclarePairedDelimiter{\abs}{\lvert}{\rvert}
\DeclarePairedDelimiter{\brk}{[}{]}
\DeclarePairedDelimiter{\crl}{\{}{\}}
\DeclarePairedDelimiter{\prn}{(}{)}
\DeclarePairedDelimiter{\norm}{\|}{\|}
\DeclarePairedDelimiter{\tri}{\langle}{\rangle}
\DeclarePairedDelimiter{\ceil}{\lceil}{\rceil}
\DeclareMathOperator{\sinc}{sinc}
\DeclareMathOperator\supp{supp}
\newcommand{\cF}{\mathcal{F}}
\newcommand{\cH}{\mathcal{H}}
\newcommand{\cO}{\mathcal{O}}
\newcommand{\R}{\mathbb{R}}
\newcommand{\C}{\mathbb{C}}
\newcommand{\Z}{\mathbb{Z}}
\DeclareMathOperator*{\argmin}{arg\,min}
\newcommand{\sob}[1]{\mathcal{W}^{r, #1}}
\newcommand{\nn}[1]{f^{\textnormal{#1}}}
\newcommand{\normalize}{\frac{1}{\prn{2\pi}^d}}
\def\fr#1#2{{\textstyle\frac{#1}{#2}}}
\newtheorem{theorem}{Theorem}
\newtheorem{lemma}{Lemma}
\newtheorem{definition}{Definition}
\newtheorem{example}{Example}
\definecolor{mycolor}{RGB}{153,153,255}
\definecolor{mycolor_orange}{RGB}{255,152,49}
\title{Exploring the Approximation Capabilities of Multiplicative Neural Networks for Smooth Functions}
\author{\name Ido Ben-Shaul \email ido.benshaul@gmail.com \\
      \addr Department of Applied Mathematics\\
      Tel-Aviv University, Israel \\
      eBay Research
      \AND
      \name Tomer Galanti \email tomer22g@gmail.com \\
      \addr Massachusetts Institute of Technology (MIT) \\ Cambridge, MA, USA \\
      \AND
      \name Shai Dekel \email shaidekel6@gmail.com\\
      \addr Department of Applied Mathematics\\
      Tel-Aviv University, Israel}
\begin{document}

\maketitle

\begin{abstract}
Multiplication layers are a key component in various influential neural network modules, including self-attention and hypernetwork layers. In this paper, we investigate the approximation capabilities of deep neural networks with intermediate neurons connected by simple multiplication operations. We consider two classes of target functions: generalized bandlimited functions, which are frequently used to model real-world signals with finite bandwidth, and Sobolev-Type balls, which are embedded in the Sobolev Space $\mathcal{W}^{r,2}$. Our results demonstrate that multiplicative neural networks can approximate these functions with significantly fewer layers and neurons compared to standard ReLU neural networks, with respect to both input dimension and approximation error. These findings suggest that multiplicative gates can outperform standard feed-forward layers and have potential for improving neural network design.
\end{abstract}

\section{Introduction}\label{sec:intro}

Deep learning with large neural networks has seen tremendous success in solving a wide range of tasks in recent years, including image classification~\citep{he2016residual,dosovitskiy2021an,zhai2021scaling}, language processing~\citep{NIPS2017_3f5ee243,devlin-etal-2019-bert,NEURIPS2020_1457c0d6}, interacting with open-ended environments~\citep{SilverHuangEtAl16nature,arulkumaran2019alphastar}, and code synthesis~\citep{chen2021evaluating}. 

Recent empirical studies have shown that neural networks that incorporate multiplication operations between intermediate neurons~\citep{10.1162/neco.1989.1.1.133,https://doi.org/10.48550/arxiv.1503.05724,https://doi.org/10.48550/arxiv.1808.00508}, such as self-attention layers~\citep{NIPS2017_3f5ee243} and hypernetworks~\citep{ha2016hypernetworks,krueger2017bayes,Littwin_2019_ICCV}, are particularly effective. For example, self-attention layers have been widely successful in computer vision~\citep{dosovitskiy2021an,zhai2021scaling} and language processing~\citep{https://doi.org/10.48550/arxiv.1601.06733,parikh-etal-2016-decomposable,https://doi.org/10.48550/arxiv.1705.04304,NIPS2017_3f5ee243}. It has also been shown that one can achieve reasonable performance with Transformers even without applying non-linear activation functions~\citep{Levine2020LimitsTD}. Additionally, hypernetworks, which use multiplication to generate network weights seem to improve the performance of neural networks on various meta-learning tasks~\citep{Oswald2020Continual,Littwin_2019_ICCV,NEURIPS2021_ac796a52}. However, the benefits of multiplication layers are not well understood from a theoretical perspective.

In this work, we study the expressive power of neural networks with multiplication layers. Specifically, we want to evaluate the number of neurons and layers needed to approximate a given function within a given error tolerance using a specific architecture. A classic result in the theory of deep learning shows that neural networks can approximate any smooth target function, known as the universal approximation property, with as few as one hidden layer~\citep{Cybenko1989ApproximationBS,Hornik1989MultilayerFN,Funahashi1989OnTA,Leshno1991MultilayerFN}. However, these papers do not provide specific information about the type of architecture and number of parameters required to achieve a given level of accuracy. This is a crucial question, as a high requirement for these resources could limit the universality of neural networks and explain their limited success in some practical applications. 

Previous work has demonstrated that functions in Sobolev spaces can be approximated by a one-hidden layer neural network with analytic activation functions~\citep{Mhaskar1996NeuralNF}. However, the number of neurons required to approximate these functions with an error of at most $\epsilon$ in the $L_{\infty}$ norm scales as $\mathcal{O}(\epsilon^{-d/r})$, where $d$ is the input dimension, $r$ is the smoothness degree of the target function, and $\epsilon>0$ is the error rate. This raises the question of whether the curse of dimensionality, the phenomenon whereby the complexity of a model grows exponentially with the input dimension, is inherent to neural networks.

On the other hand,~\cite{DeVore1989OptimalNA} proved that any continuous function approximator that approximates all Sobolev functions of order $r$ and dimension $d$ within error $\epsilon$ requires at least $\Omega(\epsilon^{-d/r})$ parameters in the $L_\infty$ norm. This result meets the bound of~\cite{Mhaskar1996NeuralNF} and confirms that neural networks cannot avoid the curse of dimensionality for the Sobolev space when approximating in the $L_{\infty}$ norm. A key question is whether neural networks can overcome this curse of dimensionality for certain sets of target functions, and what kind of architectures provide the best guarantees for approximating these functions. 

To overcome the curse of dimensionality, various studies~\citep{10.5555/3298483.3298577,doi:10.1073/pnas.1907369117,7762943,doi:10.1137/18M1189336,blanchard2022shallow,galanti2020modularity} have investigated the approximation capabilities of neural networks in representing other classes of functions with weaker notions of distance, such as the $L_2$ distance. For example,~\cite{10.5555/3298483.3298577,doi:10.1073/pnas.1907369117} showed that smooth, compositionally sparse functions with a degree of smoothness $r$ can be approximated with the $L_{\infty}$ distance up to error $\epsilon$ using deep neural networks with $\mathcal{O}(d\epsilon^{-2/r})$ neurons. Other structural constraints have been applied to functions with structured input spaces~\citep{MHASKAR201063,10.5555/3455716.3455890,https://doi.org/10.48550/arxiv.1908.00695}, compositions of functions~\citep{7762943}, piecewise smooth functions~\citep{Petersen2017OptimalAO, Imaizumi2018DeepNN}. A different line of research has focused on understanding the types of functions that certain neural network architectures can implement with regularity constraints. For example,~\cite{https://doi.org/10.48550/arxiv.1906.08039} showed that the space of 2-layer neural networks is equivalent to the Barron space when the size of their weights is restricted. They further showed that Barron functions can be approximated within $\epsilon$ using 2-layer networks with $\cO(\epsilon^{-2})$ neurons. Another line of research has considered spectral conditions on the function space, allowing functions to be expressed as infinite-width limits of shallow networks~\citep{10.5555/114836.114859,https://doi.org/10.48550/arxiv.1607.01434}. In~\citep{blanchard2022shallow} they considered the space of Korobov functions, which are functions that are practically useful for solving partial differential equations (PDEs). They showed any Korobov function can be approximated up to error $\epsilon$ in $L_2$ distance with a 2-layer neural network with ReLU activation function with $\mathcal{O}(\epsilon^{-1}\log(1/\epsilon)^{1.5(d-1)+1})$ and with a $\mathcal{O}(\log(d))$-depth network with $\mathcal{O}(\epsilon^{-0.5}\log(1/\epsilon)^{1.5(d-1)+1})$ neurons.

In a recent paper,~\cite{Montanelli2021DeepRN} provided approximation guarantees were established for generalized bandlimited functions. These functions are commonly used to model signals that have a finite range of frequencies (e.g., waves, video, and audio signals), which is known as a finite bandwidth. The solutions to many PDEs in physics are bandlimited functions, as the physical phenomena modeled by these PDEs typically have a finite range of frequencies. For example, the solutions to the wave equation, which models the propagation of waves, are bandlimited functions. In~\citep{Montanelli2021DeepRN}, it was shown that any bandlimited function can be approximated to within error $\epsilon$ using a ReLU neural network of depth $\cO(\log^2(1/\epsilon))$ with $\cO(\epsilon^{-2}\log^2(1/\epsilon))$ neurons with the $L_2$ distance.

In this paper, we study the approximation abilities of multiplicative neural network architectures with the $L_{2}$ distance. In particular, we prove that a multiplicative neural network of depth $\cO(\log(\fr{1}{\epsilon}))$ with $\cO(\epsilon^{-2}\log(\fr{1}{\epsilon}))$ neurons can approximate any generalized bandlimited function up to an error of $\epsilon$ (with constants depending on the dimension and on the band). Additionally, we also study the approximation guarantees of neural networks for approximating functions in Sobolev-Type balls of order $r$. We show that for the same error tolerance $\epsilon$, multiplicative neural networks can approximate these functions with depth $\cO(d^2\epsilon^{-1/r})$ and $\cO(d^2\epsilon^{-(1+1/r)})$ neurons, while standard ReLU neural networks require depth $\cO(d^2\epsilon^{-2/r})$ and $\cO(d^2\epsilon^{-(2+2/r)})$ neurons. These results demonstrate the superior performance of multiplicative gates compared to standard fully-connected layers. In Table~\ref{tab:contributions} we contrast our new bounds with preexisting bounds on the approximation power of neural networks for the Sobolev space, bandlimited functions, and the Sobolev-Type ball.

\begin{table}[h]
\centering
 \resizebox{\linewidth}{!}{
 \begin{tabular}{|ccccc|} 
 \hline
Space & Model & \# neurons & Depth & Reference  \\ 
 \hline
 $\mathcal{W}^{r, p}$ & $C^\infty$, non-poly & $\cO\prn{\epsilon ^ {-d/r}}$ & $\cO\prn{1}$ & \citep{Mhaskar1996NeuralNF} \\
 
 $\mathcal{W}^{r, \infty}$ & ReLU & $\cO\prn{\epsilon ^ {-d/r} \log {\frac{1}{\epsilon}}}$ & $\cO\prn{\log {\frac{1}{\epsilon}}}$ & \citep{Yarotsky2017ErrorBF} \\

 \hline
 Bandlimited functions & ReLU & $\cO\prn{\epsilon^{-2} \log^2 \frac{1}{\epsilon}}$ & $\cO\prn{\log^2{\frac{1}{\epsilon}}}$ & \citep{Montanelli2021DeepRN} \\
 
 Bandlimited functions & Multiplicative & $\cO\prn{\epsilon^{-2} \log\frac{1}{\epsilon}}$ & $\cO\prn{\log{\frac{1}{\epsilon}}}$ & This paper \\
 
 \hline
 $\mathcal{B}_{2r, 2}\subsetneq \mathcal{W}^{r, 2}$ & ReLU & $\cO\prn{d^2 \epsilon^{-\prn{2+2/r}}}$ & $\cO\prn{d^2 \epsilon^{-2/r}}$ & This paper \\
 
 $\mathcal{B}_{2r, 2}\subsetneq \mathcal{W}^{r, 2}$ & Multiplicative & $\cO\prn{d \epsilon^{-\prn{2+1/r}}}$ & $\cO\prn{d \epsilon^{-1/r}}$ & This paper \\
 
 \hline
 
 $\mathcal{B}_{1, 1}$ & Sigmodial & $\cO\prn{d \epsilon^{-2}}$ & $\cO\prn{1}$ & \citep{Barron1993UniversalAB} \\
 
 \hline
 
 \end{tabular}
 }
    \caption{Approximation results for Sobolev $\mathcal{W}^{r,p}$, Bandlimited and $\mathcal{B}_{2r, 2}$ functions by ReLU and multiplicative neural networks. The number of neurons and the depth are given in $\cO$ notation.}
  \label{tab:contributions}
\end{table}

\section{Problem Setup}\label{sec:setup}

We are interested in determining how complex (i.e., number of trainable parameters, number of neurons and layers) a model ought to be in order to theoretically guarantee approximation of an unknown target function $f$ up to a given approximation error $\epsilon > 0$. 

Formally, we consider a Banach space of functions $\mathcal{V}$ (for example, $L_{p}([0,1]^d)$), equipped with a norm $\norm{\cdot}_{\mathcal{V}}$ (for example, $\|\cdot \|_{L_p([0,1]^d)}$), and a set of target functions $\mathcal{U} \subseteq \mathcal{V}$. We also consider a set of approximators $\mathcal{H}$ and seek to quantify the ability of these approximators to approximate $\mathcal{U}$ using the following quantity 
\begin{equation*}
d_{\mathcal{V}}(\mathcal{H}, \mathcal{U}) ~=~ \inf_{\hat{f} \in \mathcal{H}} {\sup_{f \in \mathcal{U}} \|\hat{f} - f \|_{\mathcal{V}}},
\end{equation*}
which measures the maximal approximation error for approximating a target function $f \in \mathcal{U}$ using candidates $\hat{f}$ from $\mathcal{H}$. Typically, $\mathcal{H}$ is a parametric set of functions (e.g., neural networks of a certain architecture) and we denote by $\hat{f}_{\theta} \in \mathcal{H}$ a function that is parameterized by a vector of parameters $\theta \in \R^{N}$. For simplicity, we avoid writing $\theta$ in the subscript when it is obvious from context. 


\subsection{Target Function Spaces}

It is generally impossible to approximate arbitrary target functions using standard neural networks, as demonstrated in Theorem 7.2 in \citep{Devroye1996APT}. As a result, we often consider specific spaces of target functions that satisfy certain smoothness assumptions in order to obtain non-trivial results. In this work, we focus specifically on target functions $\mathcal{U}$ that satisfy the following smoothness assumptions. In this paper, we will focus on approximating functions on the unit cube $B=[0, 1]^d$, with $\abs{B}=1$, the volume of $B$.

{\bf Sobolev spaces.\enspace} Sobolev spaces are one of the most extensively studied classes of functions in approximation theory~\citep{DeVore1993ConstructiveA,Yarotsky2017ErrorBF,Liang2017WhyDN}. These spaces consist of smooth functions with bounded derivatives up to a certain order and are particularly useful in the study of partial differential equations (PDEs).

We first define the $L_p$ norm of a given function $f:\Omega \to \R$ as $\|f\|_{L_p(\Omega)}=(\smallint_{\Omega} \abs{ f(x)}^p~\textnormal{d}x)^{1/p}$, where $\Omega$ is a measurable space equipped with a Sigma-algebra $\Sigma$ and a measure $\mu$. A function $f$ is said to be in $L_p \prn{\Omega}$ if $\norm{f}_{L_p \prn{\Omega}} < \infty$.

Let $r\in \mathbb{N}$ and $p \in [1, \infty)$. The Sobolev space $\sob{p}\prn*{B}$ consists of functions $f:B \rightarrow \mathbb{R}$ with $r$-distributional derivatives in $L_p$. The Sobolev norm $\norm{\cdot}_{\sob{p}\prn{B}}$ is defined as
\[
\norm{f}_{\sob{p}\prn{B}} ~=~ \sum_{\alpha: \abs{\alpha}_1\leq r} \norm{D^{\alpha} f}_{L_p \prn{B}},
\]
where $\alpha=(\alpha_1,\dots,\alpha_d) \in \{0,\dots,r\}^d, \abs{\alpha}_1 = \alpha_1 + \dots + \alpha_d$, and $D^{\alpha} f$ is the respective distributional derivative. When $p=\infty$, the essential supremum norm is used. We also present the semi-norm:
\[
\abs{f}_{r, p} ~=~ \sum_{\alpha ~:~ \abs{\alpha} = r} \norm{D^{\alpha} f\prn{x}}_{L_p \prn{B}}.
\]


Classic results in the literature show that the number of parameters needed to approximate functions in $\sob{\infty}$ up to error $\epsilon$ is lower bounded by $\Omega(\epsilon^{-d/r})$ \citep{DeVore1989OptimalNA}. This exponential dependence on $d$ is known as the ``curse of dimensionality''.

{\bf Generalized Bandlimited functions.\enspace} Generalized bandlimited functions are functions whose spectrum, or the set of frequencies that make up the function, is limited to a certain band or range of frequencies. This property makes bandlimited functions well-suited for certain applications, such as signal processing, where it is important to ensure that the signal does not contain frequencies outside of a certain range. 

Generalized bandlimited functions are defined using their generalized Fourier transform given by an analytic function $K:\R\to\C$ (known as the kernel) and a scalar $M\geq 1$ (the band). Formally, for a given function $f:\R^d\to\R$, we denote the set of all functions $F:[-M, M]^d \to \C$ that retrieve $f$ from the frequencies $\omega \in [-M,M]^d$ using the kernel $K$ as follows:
\begin{equation*}
S_{f, K} ~=~ \left\{F:[-M,M]^d \to \R \mid f(x) = \smallint_{\brk{-M, M}^d} F\prn{\omega} K\prn{\omega\cdot x} ~\textnormal{d}\omega \right\}.
\end{equation*}
We define $F_f = \argmin_{F\in{S_{f, K}}} \norm{F}_{L^2 \prn{\brk{-M, M}^d}}$ to be the smallest $L_2$-norm function in $S_{f, K}$. For instance, when {$K(x)= \exp(ix)$, the function $F_f$ corresponds to the normalized standard Fourier transform of $f$, which is given by $F= \fr{1}{(2\pi)^d}\prn{\mathcal{F} f }\prn{\omega} = \fr{1}{(2\pi)^d}\int_{\R^d} f\prn{x} \exp\prn{- i\omega \cdot x} ~\textnormal{d}x$.} 


The space $\mathcal{H}_{K,M}(B)$ of generalized bandlimited functions is a Hilbert space of functions that can be represented as a weighted sum of the function $K$ over a finite domain. This space is equipped with an inner product and a norm, which allow us to measure the similarity and magnitude of these functions, respectively. We define $\cH_{K, M}\prn{B}$ as the functions $f:B\rightarrow\R$ such that
\begin{small}
\begin{equation*}
\cH_{K, M}\prn{B} ~=~ \crl*{\forall x\in B, f\prn{x} = \smallint_{\brk{-M, M}^d} { F\prn{\omega}K\prn{\omega \cdot x} ~\textnormal{d}\omega} \mid F:\brk{-M, M}^d \rightarrow \C \text{ is in } L^2 \prn{\brk{-M, M}^d}}.
\end{equation*}
\end{small}
The inner product and norm in this space are defined as follows:
$\langle f,g \rangle_{\cH_{K, M}\prn{B}} = \smallint_{\brk{-M, M}^d} F_f\prn{\omega} \overline{F_g}\prn{\omega}~\textnormal{d}\omega$ and norm $\norm{f}_{\cH_{K, M}\prn{B}} = \norm{F_f}_{L^2{\prn{\brk{-M, M}^d}}}$.

One of the key properties of generalized bandlimited functions is that they can be completely reconstructed from a discrete set of samples. This is known as the Shannon-Nyquist theorem~\citep{Shannon1984CommunicationIT}, and it is an important result in the field of signal processing and communication. An interesting consequence of this theorem is that even in high-dimensions, where seemingly unpredictable geometrical phenomena may occur (e.g. \cite{Blum2020FoundationsOD}, Chapter 2),  still perfectly reconstruct a function given it's values at the Nyquist frequency. For more details, see Appendix~\ref{app:examples}. 



{\bf Sobolev-Type Balls.\enspace} Sobolev-Type balls are sets of functions that satisfy smoothness constraints on higher-order derivatives~\citep{Barron1993UniversalAB,  Jones1992ASL, Pinkus1985nWidthsIA, Wahba1990SplineMF, Blanchard2022ShallowAD}. One such constraint is that the magnitude of the function's Fourier transform, $\abs{\mathcal{F}f\prn{\omega}}$, decays fast enough as $\abs{\omega}$ approaches infinity. These constraints are imposed by comparing the magnitude of the Fourier transform of the function, $\mathcal{F}f\prn{\omega}$, with the magnitude of $\vert\omega\vert ^r$ for some number $r>0$. In this paper we define a generalized form of Sobolev-type balls:
\begin{equation*}
\mathcal{B}_{r, \rho} ~=~ \crl*{f:\R^d\rightarrow\R, f\in L_2\prn{\R^d}: \frac{1}{\prn{2\pi}^d}\int_{\R^d}\abs{\prn{\mathcal{F}f}\prn{\omega}}~\textnormal{d}\omega, \frac{1}{\prn{2\pi}^d}\int_{\R^d}\abs{\omega}^{r} \abs{\prn{\mathcal{F}f}\prn{\omega}}^{\rho}~\textnormal{d}\omega \leq 1}.
\end{equation*}


In~\citep{Barron1993UniversalAB}, they explored the ability of neural networks to approximate functions in the space $P_1 = \crl{f\in L_2 \prn{\R^d} \mid \int_{\R^d}\abs{\omega}\abs{\prn{\mathcal{F}f}\prn{\omega}} < \infty}$. We now demonstrate that any function in $P_1$ has a normalized representation in $\mathcal{B}_{1,1}$. For this, we show that the conditions on $P_1$ allow us to bound $\norm{\mathcal{F} f}_{L_1 \prn{\R^d}}$. Namely, 
\begin{equation*}
\begin{aligned}
\int_{\R^d} \abs{\prn{\mathcal{F} f} \prn{\omega}}~\textnormal{d}\omega ~&=~  \int_{\brk{-1, 1}^d} \abs{\prn{\mathcal{F} f} \prn{\omega}}~\textnormal{d}\omega + \int_{\R^d {\backslash} \brk{-1, 1}^d} \abs{\prn{\mathcal{F} f} \prn{\omega}}~\textnormal{d}\omega \\ 
~&\leq~  C_1 \prn*{\int_{\brk{-1, 1}^d} \abs{\prn{\mathcal{F} f} \prn{\omega}}^2~\textnormal{d}\omega }^{1/2} + \int_{\R^d \backslash \brk{-1, 1}^d} \abs{\omega}\abs{\prn{\mathcal{F} f} \prn{\omega}}~\textnormal{d}\omega \\ 
~&\leq~  C_1 \norm{f}_2 + \int_{\R^d} \abs{\omega}\abs{\prn{\mathcal{F} f} \prn{\omega}}~\textnormal{d}\omega, 
\end{aligned}
\end{equation*}

for $C_1 > 0$. Therefore, any function in the space $P_1$ can be scaled to a function in $\mathcal{B}_{1, 1}$. Specifically, they showed that sigmoidal neural networks with a bounded depth and $\mathcal{O}(d \epsilon^{-2})$ neurons can approximate such functions with error at most $\epsilon$. 

In this paper we will focus on the space $\mathcal{B}_{2r, 2}$. In Lemma~\ref{lem:ball_in_sob} in the appendix we show that this space is embedded as a proper subspace of $\mathcal{W}^{r, 2}$, with the following norm: 
\begin{equation}\label{def:ball_norm}
\norm{f}_{\mathcal{B}_{2r, 2}} ~=~ \prn*{\normalize \int_{\R^d} (1+\abs{\omega}^{2r}) \abs{\prn{\mathcal{F} f}\prn{\omega}}^{2}~\textnormal{d}\omega}^{1/2}.
\end{equation}
It is worth mentioning that \cite{Pinkus1985nWidthsIA} showed that using traditional basis functions, functions of the space $P_2 = \crl{f\in L_2 \prn{\R^d} \mid \int_{\R^d}\abs{\omega}^{2r}\abs{\prn{\mathcal{F}f}\prn{\omega}}^2 < \infty}$ may be approximated with error at most $\epsilon$ using $\cO(\epsilon^{-d/2r})$ parameters. In this paper, we show that by enforcing an additional constraint on the $L_1$ norm of the Fourier coefficients, we can circumvent exponential dependence on the dimension $d$.

\subsection{Neural Network Architectures}\label{sec:arch}

In the previous section, we described a setting in which a class of candidate functions $\mathcal{H}$ serve as approximators to a class of target functions $\mathcal{U}$. In this work, we compare the approximation guarantees of standard multi-layered perceptrons and a generic set of neural networks that incorporate multiplication operations. Our goal is to understand whether multiplication layers can provide better guarantees to approximate bandlimited functions and members of $\mathcal{B}_{2r, 2}$. 

{\bf Multilayered perceptrons.\enspace} A multilayered perceptron is a type of neural network architecture that consists of $L$ layers of linear transformations interspersed with element-wise non-linear activation functions (e.g., the ReLU function). Typically, the last layer does not include a non-linear activation. 

\begin{definition}[Multilayered perceptron]
A multilayered perceptron is a function $f=y_{L,1}:\R^{p_0}\to \R$ defined by a set of univariate functions $\bigcup^{L}_{i=0}\{y_{i,j}\}^{p_i}_{j=1}$. Each function $y_{i,j}:\R^{p_0}\to\R$ (also known as a neuron) is recursively computed in the following manner 
\begin{equation*}
\begin{aligned}
y_{L,j}(x) ~&=~ \langle w_{L,j},y_{L-1}(x)\rangle + b_{L,j} \\
y_{i,j}(x) ~&=~ \sigma(\langle w_{i,j},y_{i-1}(x)\rangle + b_{i,j})\\ 
y_{0,j}(x) ~&=~ x_j,
\end{aligned}
\end{equation*}
where $i \in [L-1]$, $j \in [p_i]$, and $w_{i,j}\in \R^{p_{i-1}}$ and $b_{i,j}\in\R$ are the weights and a bias of the neuron $y_{i,j}$ and $y_i=(y_1,\dots,y_{p_i})$. The function $\sigma:\R\to\R$ is a non-linear activation function.  
\end{definition}
In this work, we focus on neural networks with ReLU activations, which are defined as $\sigma(x) = \max(0,x)$. However, it is worth noting that other activation functions have been proposed in the literature, such as sigmoidal functions that are measurable functions $\eta$  that satisfy $\eta(x)\rightarrow 0$ as $x\rightarrow -\infty$ and $\eta(x)\rightarrow 1$ as $x\rightarrow \infty$. 

{\bf Multiplicative neural networks.\enspace} In this work, we are interested in comparing the approximation abilities of standard ReLU networks, with that of neural networks that incorporate multiplication layers (also known as product units~\citep{10.1162/neco.1989.1.1.133}). In order to fully understand the added benefits of multiplication gates, we ask the following question: {\em Are multiplications between neurons sufficient to substitute the non-linear activations in multilayered perceptrons?}

\begin{definition}[Multiplicative network]\label{def:second_order} A multiplicative neural network is a function $f=y_{L,1}:\R^{p_0}\to \R$ defined by a set of univariate functions $\bigcup^{L}_{i=1}\{y_{i,j}\}^{p_i}_{j=1}$. Each function $y_{i,j}:\R^{p_0}\to\R$ (also known as a neuron) is recursively computed in the following manner 
\begin{equation*}
\begin{aligned}
y_{i,j}(x) ~&=~ \langle w_{i,j}, y_{i-1}(x)\rangle + a_{i,j} y_{i-1,j_1}(x) y_{i-1,j_2}(x) + b_{i,j}\\ 
y_{0,j}(x) ~&=~ x_j,
\end{aligned}
\end{equation*}
where $a_{i,j}, b_{i,j} \in \R$, $w_{i,j}\in \R^{p_{i-1}}$ are trainable parameters and $y_i=(y_1,\dots,y_{p_i})$.
\end{definition}
A multiplicative network differs from standard multilayered perceptrons in two main ways. First, we do not incorporate non-linearities 
(ReLU activations) between the intermediate layers. In addition, each neuron incorporates a multiplication gate between two preceding neurons.  The total number of trainable parameters in a multiplicative layer is $p_{i+1}p_i+2p_i$, which is $p_i$ more parameters than a fully-connected layer. Following~\cite{Yarotsky2017ErrorBF, Blanchard2022ShallowAD} we measure the complexity of a neural network using its depth $L$ and the number of neurons $G=\sum^{L-1}_{i=1}p_i$ it includes.

{\bf Self-Attention and multiplicative layers.\enspace} Let us describe a single-headed self-attention operation in the original Transformer~\citep{NIPS2017_3f5ee243}. Each layer $i \in [L]$ of a depth-$L$ Transformer encoder is defined as follows. The input to the $i$th layer is a sequence of $N$ tokens, denoted by $x_i=\crl{x_{i, j}}_{j=1}^{N}$, where each $x_{i,j} \in \R^{d_x}$ represents the $j$th token of the $i$th layer. To compute the output of the $i$th layer at a particular position $e\in\brk{N}$, we use the following formula:
\[
f_\textnormal{SA}^{i, e} (x_i) ~=~ \sum_{j=1}^N \textnormal{softmax}_j \left(\fr{1}{\sqrt{d_a}} \langle W^{Q, i} x_{i, e}, W^{K, i} x_{i, e}\rangle \right) W^{V, i} x_{i, j},
\]
where $\textnormal{softmax}_j(f(x)) = {\exp(f(x)_j)} / { \sum_{j'} \exp(f(x)_{j'})}$ is the softmax operator, and the trainable weight matrices $W^{K,i}$, $W^{Q,i}$, $W^{V,i} \in \R^{d_a \times d_x}$ convert the representation from its dimension $d_x$ into the attention dimension $d_a = d_x$, creating `Key', `Query', and `Value' representations, resp. As can be seen, the self-attention layers use multiplicative connections when computing the following inner product $\langle W^{Q, i} x_{i, e}, W^{K, i} x_{i, e} \rangle$. This operation computes multiplications between the coordinates of transformations of the same token $x_{i,e}$. In other words, it can be thought of as computing a multiplicative layer on an input vector $x_{i,e}$. As a side note, in addition to self-attention layers, transformers also incorporate commonly used layers such as fully-connected layers, residual connections, and normalization layers, which are not the focus of this paper.



\section{Representation Power of Multiplicative Neural Networks}

In this section, we explore the expressive power of neural networks with multiplication layers. We first demonstrate that these networks can easily represent polynomial functions, which we then use to approximate bandlimited functions. This allows us to approximate functions in the space $\mathcal{B}_{2r, 2}$ without suffering from the curse of dimensionality. Specifically, we prove the following lemma:

\begin{restatable}{lemma}{polyRealize}\label{lemma:poly_realization}
For any polynomial $p_n:\R \rightarrow \R$ of degree $n$ of the form $p_n(x) = \sum_{k=0}^{n} c_k x^k$, there exists a multiplicative neural network $\nn{POL}_{n}:\R^3\to\R$, of depth $L_n = \mathcal{O}(n)$ with $G_n=\mathcal{O}(n)$ neurons that satisfies $\nn{POL}_{n}\prn{x, x, c_0} = p_n\prn{x}$ for $c_0\in\R$, and $x\in\R$.
\end{restatable}

With this lemma in hand, we can show how to use multiplicative networks to approximate analytic kernel functions $K:\R\rightarrow \C$ by leveraging their ability to represent polynomials. By approximating a certain class of polynomials, we can demonstrate how to use these networks to approximate analytic kernels. Once we have the ability to approximate analytic kernels, we can use Maurey's Theorem (see Lemma~\ref{lem:maurey}) to express any function $f\in\cH_{M}$ as a finite sum of kernel superpositions, and construct a network that approximates this sum.

{\bf Approximating of real-valued analytic functions.\enspace} In this section, we show how to approximate certain real-valued analytic functions. For this purpose, we recall the notion of the Bernstein s-ellipse, which is a geometric shape defined on the complex plane that is useful in approximation theory.
\begin{definition}
Let $M\geq1$, $s>1$ be two scalars. The Bernstein s-ellipse on $[-M, M]$ is defined as follows
\[
E_s^M ~=~ \crl*{x+i y \in \C : \fr{x^2}{\prn{a_s^M}^2}+\fr{y^2}{\prn{b_s^M}^2}=1},
\]
whose semi-major and semi-minor axes are $a_s^M = M\fr{s+s^{-1}}{2}$ and $b_s^M = M\fr{s-s^{-1}}{2}$.
\end{definition}
The parameter $s$ controls the shape of the ellipse, and the parameter $M$ determines its size. For example, when $s=2$, the ellipse is a circle centered at the origin with a radius $M$. As $s$ increases, the ellipse becomes more elongated and its semi-minor axis decreases. The semi-major and semi-minor axes of the ellipse determine its maximal and minimal values, respectively. Before stating our result in Theorem~\ref{thm:deep_som_for_analytic}, we recall the following theorem of \cite{Trefethen2019ApproximationTA}:

\begin{theorem}[Theorem 8.2 of \cite{Trefethen2019ApproximationTA}]\label{thm:chebychev_unique}
Let $M >0, s>2$ be scalars and $K:[-M,M]\to\R$ be an analytic function that is analytically continuable to the ellipse $E_s^M$, where it satisfies $\sup_{x \in E_s^M}\abs{K\prn{x}}\leq C_K$ for some constant $C_K>0$. For every $n\in \mathbb{N}$, there exists a polynomial $h_n: \R\rightarrow\C$ of degree $n$, such that, 
\[
\norm{h_n -K}_{L^{\infty} \prn{\brk{-M, M}}} ~\leq~ \frac{2 C_K s^{-n}}{s-1}.
\]
\end{theorem}

Theorem~\ref{thm:chebychev_unique} states that any analytic function $K$ that is bounded on the Bernstein s-ellipse $E_s^M$ can be accurately approximated by a polynomial $h_n$ of degree $n$, with the error decreasing exponentially as the degree $n$ increases. 

As we show next, the use of multiplication layers in neural networks can improve the efficiency of function approximation in certain cases. The following theorem shows that deep multiplicative networks can approximate real-valued analytic functions on bounded intervals.

\begin{restatable}{theorem}{analyticApprx}
\label{thm:deep_som_for_analytic} Let $M\geq1$, $s>2$, $C_K>0$ and $\epsilon\in (0,1)$ be scalars. Then, for any real-valued analytic function $K:\brk{-M, M}\to\R$ that is analytically continuable to the ellipse $E_s^M$ where $\abs{K\prn{x}}\leq C_K$, there exists a deep multiplicative network $\nn{MA}:[-M,M]^3 \to \R$ (MA stands for `Multiplicative Analytic') of depth $L_{\epsilon}=\cO\prn{\fr{1}{\log_2 s} \log_2 \fr{C_K}{\epsilon}}$ with $G_{\epsilon}=\cO\prn{\fr{1}{\log_2 s} \log_2 \fr{C_K}{\epsilon}}$ neurons that satisfies
\[
\norm{\nn{MA}\prn{x, x, x} -K\prn{x}}_{L^{\infty} \prn{\brk{-M, M}}} ~\leq~ \epsilon.
\]
\end{restatable}

Theorem~\ref{thm:deep_som_for_analytic} establishes that deep multiplicative networks with second-degree multiplications can approximate any real-valued analytic function that is bounded on the Bernstein s-ellipse $E_s^M$ with error bounded by a quantity that decreases exponentially with the depth of the network.

In~\citep{Montanelli2021DeepRN}, the authors show that the kernel may be approximated with depth $L_{\epsilon}=\cO\prn{\fr{1}{\log^2_2 s} \log^2_2 \fr{C_K}{\epsilon}}$ with $G_{\epsilon}=\cO\prn{\frac{1}{\epsilon^2 \log^2_2 s} \log^2_2 \frac{C_K}{\epsilon}}$ neurons.  In contrast, our approach achieves a $\cO\prn{\log(1/\epsilon)}$ improvement in depth and a $\cO\prn{\log(1/\epsilon^2)}$ in number of neurons, resulting in an exponentially faster convergence to the target function. 



{\bf Approximation of bandlimited functions.\enspace} As a next step, we study the ability of neural networks to approximate bandlimited functions. We start by showing how bandlimited functions can be approximated using analytic kernels. For this purpose, we will use Maurey's theorem~\citep{SAF_1980-1981____A5_0,Milman1986} that states that for Hilbert subspaces with a bounded norm, any function in the convex hull can be easily approximated using points from the subspace, where the rate of approximation is dependent on the number of points used.  

The following theorem shows that a deep multiplicative network can approximate in $B=\brk{0,1}^d$, a bandlimited function up to a given error tolerance using a relatively small number of neurons and depth. 

\begin{restatable}{theorem}{bandlimitedApprx}\label{thm:bandlimitedApprx}
Let $\epsilon \in (0,1)$, $M>1, d\geq2$, and $K:\R\rightarrow\C$ be an analytic kernel that holds the assumptions of Theorem \ref{thm:deep_som_for_analytic} with respect to $s>2, C_K>0$, and bounded by a constant $D_K\in(0, 1]$ on $\brk{-dM, dM}$. Let $f$ be a real-valued function in $\cH_{K, M}\prn{B}$. Further, let  $F:\brk{-M, M}^d \rightarrow \C$ be a square-integrable function such that $f(x) = \smallint_{\brk{-M, M}^d} F\prn{\omega}K\prn{\omega \cdot x} ~\textnormal{d}\omega$. We define $C_F = \int_{\R^d} \abs{F\prn{\omega}} ~\textnormal{d}\omega$ = $\int_{\brk{-M, M}^d} \abs{F\prn{\omega}} ~\textnormal{d}\omega$. Then, there exists a deep multiplicative network $\nn{MBL}:B \to \R$ (MBL stands for `Multiplicative bandlimited') of depth $L_{\epsilon}=\cO\prn*{\fr{1}{\log_2 s} \log_2 \fr{C_F C_K }{\epsilon}}$ with $G_{\epsilon}=\cO\prn*{\fr{C^2_F}{\epsilon^2  \log_2 s}\log_2{\fr{C_F C_K}{\epsilon}}}$ neurons that satisfies $\norm{\nn{MBL}-f}_{L^2 \prn{B}} \leq \epsilon$.
\end{restatable}

The above theorem shows that one can approximate bandlimited functions up to error $\epsilon$ using multiplicative neural networks of depth $L_{\epsilon}=\cO\prn*{\log(1/\epsilon)}$ using $G_{\epsilon}=\cO\prn*{\epsilon^{-2}\log(1/\epsilon)}$ neurons. In comparison,~\cite{Montanelli2021DeepRN} showed that one can approximate bandlimited functions to the same level of approximation using standard ReLU networks of depths $L_{\epsilon}=\cO\prn*{\log^2(1/\epsilon)}$ with $G_{\epsilon}=\cO\prn*{\epsilon^{-2}\log^2(1/\epsilon)}$ neurons. This result demonstrates the inherent parameter efficiency of multiplicative neural networks in comparison with standard ReLU networks.




{\bf Approximation of Smooth Functions.\enspace} We now turn to show results for Sobolev-Type functions. We use the results on bandlimited functions shown in Theorem \ref{thm:bandlimitedApprx} to approximate functions in $\mathcal{B}_{2r, 2}\subsetneq \mathcal{W}^{r, 2}$. We show that using slightly stronger assumptions, we get an approximation rate comparable to those shown by \cite{Barron1993UniversalAB} (where the network is a shallow sigmodial network model) using multiplicative neural networks (i.e. without non-linear activations). Further, since these are in fact in $\mathcal{W}^{r, 2}$, we may better characterize such functions.


\begin{restatable}{theorem}{strictSobRS}
\label{thm:strict_sobolev_rs} Let $d \geq 2,r \in \mathbb{N}$, $f\in\mathcal{B}_{2r, 2}$ and $\epsilon>0$. There exists a deep ReLU network $\nn{RS}$ (standing for ``ReLU Sobolev'') with a depth of $L_{\epsilon} = \cO(d^2 \epsilon^{-2/r})$ and $G_{\epsilon}= \cO(d^2 \epsilon^{-\prn{2+2/r}})$ neurons, such that $\norm*{\nn{RS} - f}_{L_2 \prn{B}} \leq \epsilon$. 
\end{restatable}

\begin{restatable}{theorem}{strictSobMS}\label{thm:strict_sobolev_ms} Let $d \geq 2,r \in \mathbb{N}$, $f\in\mathcal{B}_{2r, 2}$ and $\epsilon>0$. There exists a deep ReLU network $\nn{MS}$ (standing for ``Multiplicative Sobolev'') with a depth of $L_{\epsilon}= \cO(d \epsilon^{-1/r})$ and $G_{\epsilon}= \cO(d \epsilon^{-\prn{2 + \prn{1/r}}})$ neurons, such that $\norm*{\nn{MS} - f}_{L_2 \prn{B}} \leq \epsilon$. 
\end{restatable}

\begin{proof} Let $f\in\mathcal{B}_{2r, 2}$. We would like to approximate $f$ using a bandlimited function $f_M$ and then approximate $f_M$ using a multiplicative neural network  $\nn{BL}=\nn{MS}$. Let $M >1$ be a band. 

We recall the Inverse Fourier transform given by $\prn{\mathcal{F}^{-1} g}\prn{x} = \frac{1}{\prn{2\pi}^d} \int_{\R^d} g\prn{\omega} \exp\prn{i\omega \cdot x} ~\textnormal{d}\omega$. We define the bandlimiting of $f:\R^d\rightarrow\R$ as $f_M = \mathcal{F}^{-1} \prn{\mathcal{F} {f}\mathbbm{1}_{\brk{-M, M}^d}}$, such that $f_M \in {\cH_{K, M}\prn{B}}$ for {$K\prn{x} = \exp(ix)$ and $F= \normalize \mathcal{F}f$}. We have
\begin{equation*}
\begin{aligned}
\norm{f-f_M}_{L_2 \prn{B}} ~&\leq~ 
\prn*{\frac{1}{\prn{2\pi}^d}\norm*{\cF f-\cF f\mathbbm{1}_{\brk{-M, M}^d}}_{L_2 \prn{\R^d}}^2}^{1/2} \\ 
~&=~ \prn*{\normalize\int_{\R^d \backslash \brk{-M, M}^d}{\abs*{\cF f\prn{\omega}}^2}~\textnormal{d}\omega}^{1/2}.
\end{aligned}
\end{equation*}

For any $\omega\in \R^d \setminus \brk{-M, M}^d$, we have $\abs{M^{-1} \omega}^{2r} \geq 1$. Therefore, 
\begin{equation*}
\begin{aligned}
\prn*{\normalize\int_{\R^d \backslash \brk{-M, M}^d}{\abs*{\cF f\prn{\omega}}^2}~\textnormal{d}\omega}^{1/2} ~&\leq~ \prn*{\normalize\int_{\R^d \backslash \brk{-M, M}^d}\abs*{M^{-1}\omega}^{2 r}{\abs*{\cF f\prn{\omega}}^2}~\textnormal{d}\omega}^{1/2}\\ 
~&\leq~ M^{-r} \prn*{\normalize\int_{\R^d}{\abs{\omega}^{2r} \abs*{\cF f\prn{\omega}}^2}~\textnormal{d}\omega}^{1/2} \\ 
~&\leq~ M^{-r},
\end{aligned}
\end{equation*}
where the final inequality is due to $f\in\mathcal{B}_{2r, 2}$. For any $\epsilon>0$, we set $M = \prn{2/\epsilon}^{1/r}$. We will construct a neural network $\nn{MS}$ to approximate the bandlimited function $f_M$, such that
\begin{equation*}
\norm{f_M - \nn{MS}}_{L_2 \prn{B}} ~\leq~ \epsilon/2.
\end{equation*}
Assuming we have constructed such $\nn{MS}$, then by the triangle inequality we then arrive at
\begin{equation*}
\begin{aligned}
\norm{f - \nn{MS}}_{L_2 \prn{B}} 
~\leq~ \norm{f - f_M}_{L_2 \prn{B}} + \norm{f_M - \nn{MS}}_{L_2 \prn{B}} ~\leq~ M^{-r} + \epsilon/2 ~\leq~ \epsilon.
\end{aligned}
\end{equation*}

We define a function $F:\brk{-M, M}^d \rightarrow \C$ as follows
\[
F\prn{\omega} ~=~ \normalize \prn{\mathcal{F} f}\prn{\omega}.
\]
We then have the following identity
{\[
f_M (x) ~=~ \int_{\brk{-M, M}^d} F \prn{\omega}K\prn{\omega \cdot x} ~\textnormal{d}\omega ~=~ \int_{\brk{-M, M}^d} \fr{1}{(2\pi)^d} \mathcal{F}f \prn{\omega}  \exp\prn{i\omega\cdot x} ~\textnormal{d}\omega.
\]}
We may now work under the conditions of Theorem~\ref{thm:bandlimitedApprx}. We consider that {$C_{F} = \int_{\brk{-M, M}^d} \abs{F\prn{\omega}}~\textnormal{d}\omega = \normalize\int_{\brk{-M, M}^d} \abs{\cF{f}\prn{\omega}}~\textnormal{d}\omega \leq 1$}, where the inequality is due to the definition of $\mathcal{B}_{2r, 2}$. The kernel {$K\prn{t} = \exp\prn{i t}$} takes as input $t=\omega \cdot x$, for $x \in B$ and $\omega\in\brk{-M, M}^d$. Therefore, $t\in\brk{-d M, d M}$, and $K:\brk{-d M, d M}\rightarrow\R$. We note that $K$ is continuable to the Bernstein 4-ellipse $E_4^{d M}$. We notice that $a^{d M}_4 = d M(4+4^{-1})/2$ is the larger axis, and therefore the maximal norm of $K$ on $E^{d M}_s$ is given by $K(a^{dM}_4)$:
{\[
C_K ~=~ \max_{t} \abs{K\prn{t}} ~\leq~ \exp\prn{dM \fr{4+4^{-1}}{2}}.
\]}
\noindent{Further, for any $t\in\R$ we have $\abs{K(t)}\leq 1 = D_K$.}

We now approximate $f_M$ with a multiplicative network. Using the results of Theorem \ref{thm:bandlimitedApprx}, there exists a deep multiplicative neural network $\nn{MS}$ that approximates the bandlimited function $f_M$ in $L_2 \prn{B}$ with error bounded by $\epsilon/2$ and depth
\begin{equation}
\begin{aligned}
L_{\epsilon} ~&\leq~ C_1 \frac{1}{\log_2 4} \log_2 {\frac{2 C_K C_F }{\epsilon}} \\ ~&\leq~ C_2 \frac{1}{\log_2 4} {\log_2 {\frac{{\exp\prn{d M \frac{4 + 4^{-1}}{2}}}}{\epsilon}}} \\ ~&\leq~ C_2 \prn*{3d \epsilon^{-1/r}  + \log_2 (1/\epsilon)} \\ ~&\leq~ C_3\cdot d \epsilon^{-1/r},
\end{aligned}
\end{equation}
for some constants $C_1,C_2,C_3>0$. In addition, the number of neurons can be bounded by
\begin{equation}
\begin{aligned}
G_{\epsilon} ~&\leq~ C'_1 {C_F^2} \cdot \epsilon^{-2} \log_2 {\frac{2 C_K C_F }{\epsilon}} \\ ~&\leq~ C'_2 \cdot \epsilon^{-2} {\log_2 {\frac{{\exp\prn{d M \frac{4 + 4^{-1}}{2}}}}{\epsilon}}} \\ 
~&\leq~ C'_2 \cdot \epsilon^{-2} \prn{3d (2/\epsilon)^{1/r} + \log_2(1/\epsilon)} \\ 
~&\leq~ C'_3\cdot d \epsilon^{-\prn{2 + 1/r}},
\end{aligned}
\end{equation}
for some constants $C'_1,C'_2, C'_3>0$. 
\end{proof}

Theorems~\ref{thm:strict_sobolev_rs}-\ref{thm:strict_sobolev_ms} provide insights into several interesting properties of the Sobolev-Type ball $\mathcal{B}_{2r, 2}$. First, we see that approximating these functions does not suffer from the curse of dimensionality that occurs when approximating the full Sobolev space. Secondly, for both ReLU networks and multiplicative networks, the bound becomes tighter as the smoothness $r$ increases, which is a desirable property for approximation error bounds. In fact, as $r$ approaches infinity, the bound approaches the one proposed in \citep{Barron1993UniversalAB}. Lastly, we show that for the same error tolerance $\epsilon$, multiplicative neural networks can approximate a target function $f \in\mathcal{B}_{2r, 2}$ with a depth of $\cO(d^2\epsilon^{-1/r})$ and $\cO(d^2\epsilon^{-(1+1/r)})$ neurons, while standard ReLU neural networks require a depth of $\cO(d^2\epsilon^{-2/r})$ and $\cO(d^2\epsilon^{-(2+2/r)})$ neurons. This result demonstrates that multiplicative neural networks have stronger approximation guarantees when approximating functions in the Sobolev-Type space.

\section{Conclusions}\label{sec:conclusions}

Previous papers have studied the approximation guarantees of standard fully-connected neural networks to approximate functions in the Barron space $\mathcal{B}_{1,1}$~\citep{Barron1993UniversalAB}, the space of bandlimited functions~\citep{Montanelli2021DeepRN}, and the Korobov space~\citep{Blanchard2022ShallowAD}. These studies have shown that fully-connected networks can approximate a wide range of smooth functions without suffering from the curse of dimensionality and have provided insights into the tradeoffs between the width and depth of neural networks in learning certain types of functions. However, these results are limited to variants of fully-connected network and do not provide information about other types of architectures. 

In this paper, we extend these results by exploring the approximation guarantees of both multiplicative neural networks and standard fully-connected networks to approximate bandlimited functions and members of the Sobolev-Type ball $\mathcal{B}_{2r, 2}$. Our results show that multiplicative neural networks achieve stronger approximation guarantees compared to standard ReLU networks. In addition, we show that, unlike the Barron space and the space of bandlimited functions, $\mathcal{B}_{2r, 2}$ is a subset of the Sobolev space $\sob{2}$. Therefore, our results demonstrate that it is possible to avoid the curse of dimensionality for wide subsets of the Sobolev space.

\section*{Acknowledgements}

Tomer Galanti was supported by the Center for Minds, Brains and Machines (CBMM), funded by
NSF STC award CCF-1231216 and by NSF award 213418.

\newpage

\bibliography{tmlr_bib}
\bibliographystyle{tmlr}

\newpage

\appendix

\section{Examples}\label{app:examples}

\begin{example}\label{ex:BL}
Let $f\in{L}_{2} \prn{\R ^d}$ such that $f$ is Bandlimited function with band $\pi$ (i.e., $\supp \prn{\mathcal{F}f} \subset \brk{-\pi, \pi}^d$). Let $\phi\prn{x} = \sinc\prn{x}= \sin\prn{\pi x} / {\pi 
 x}$, and $\phi_k= \phi\prn{{\cdot} -k}$ for all $k\in\Z^{d}$. By the Shannon-Nyquist theorem~\citep{Shannon1984CommunicationIT}, we have: 
\begin{equation*}
\begin{aligned}
f\prn{x}~=~\sum_{k\in\Z^d} \tri{f, \phi_k} \phi_k \prn{x}~=~\sum_{k\in\Z^d} f\prn{k} \phi{\prn{x-k}}.
\end{aligned}
\end{equation*}
This means that every Bandlimited function can be completely determined using a discrete set of integer samples. This result is particularly surprising for high-dimensional functions ($d$ is large) since  the maximal distance between a point $x$ and a sampling point grows with $d$. For example, the vertex of the unit cube in $\R^d$ is of distance $\sqrt{d}/2$ away from its center. Despite this, we can still recover samples from the integer vertices, even when the distances scale as $\sqrt{d}$. This property is useful when recovering high-dimensional functions using neural networks. See Figure~\ref{fig:example} for illustration.
\end{example}

\begin{figure}[h]
\begin{center}
\begin{tikzpicture}[x=3cm, y=3cm, >=stealth]

\draw (0,0.5) -- (1,0.5);
\draw (0,0.5) -- (0.5,0.5) node [above, midway] {\small $\frac{1}{2}$};
\node[draw,circle,inner sep=1pt, fill=mycolor] at (0,0.5) {};
\node[draw,circle,inner sep=1pt, fill=mycolor] at (1,0.5) {};

\node[draw,circle,inner sep=1pt, fill=mycolor_orange] at (0.5,0.5) {};

\draw (2,0) -- (3,0) -- (3,1) -- (2,1) -- cycle;
\draw (2,0) -- (2.5,0.5) node [above, midway] {\small $\frac{\sqrt{2}}{2}$};

\node[draw,circle,inner sep=1pt, fill=mycolor_orange] at (2.5,0.5) {};
\node[draw,circle,inner sep=1pt, fill=mycolor] at (2,0) {};
\node[draw,circle,inner sep=1pt, fill=mycolor] at (3,0) {};
\node[draw,circle,inner sep=1pt, fill=mycolor] at (3,1) {};
\node[draw,circle,inner sep=1pt, fill=mycolor] at (2,1) {};

\draw (4,0,0) -- (5,0,0) -- (5,1,0) -- (4,1,0) -- cycle;
\draw (4,0,1) -- (5,0,1) -- (5,1,1) -- (4,1,1) -- cycle;
\draw (4,0,0) -- (4,0,1);
\draw (5,0,0) -- (5,0,1);
\draw (5,1,0) -- (5,1,1);
\draw (4,1,0) -- (4,1,1);

\draw (5,0,1) -- (4.5,0.5,0.5) node [above, midway] {\small $\frac{\sqrt{3}}{2}$};
\node[draw,circle,inner sep=1pt, fill=mycolor_orange] at (4.5,0.5,0.5) {};

\node[draw,circle,inner sep=1pt, fill=mycolor] at (4,0,0) {};
\node[draw,circle,inner sep=1pt, fill=mycolor] at (5,0,0) {};
\node[draw,circle,inner sep=1pt, fill=mycolor] at (5,1,0) {};
\node[draw,circle,inner sep=1pt, fill=mycolor] at (4,1,0) {};

\node[draw,circle,inner sep=1pt, fill=mycolor] at (4,0,1) {};
\node[draw,circle,inner sep=1pt, fill=mycolor] at (5,0,1) {};
\node[draw,circle,inner sep=1pt, fill=mycolor] at (5,1,1) {};
\node[draw,circle,inner sep=1pt, fill=mycolor] at (4,1,1) {};

\end{tikzpicture}

\end{center}
\caption{Illustration of Example \ref{ex:BL}. A bandlimited function $f$ can be reconstructed using a discrete set of values, despite the fact that the distances in each cube grow as $\mathcal{O}(\sqrt{d})$.}
\label{fig:example}
\end{figure}
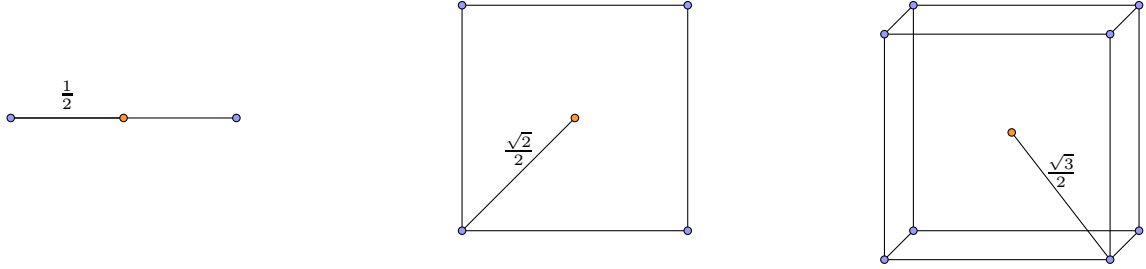

\section{Proofs}\label{app:proofs}

\begin{definition}
    For two Banach spaces $V_1$ and $V_2$, we say that $V_1$ is embedded in $V_2$ if $\norm{u}_{V_2} \leq C\norm{u}_{V_1}$ for some constant $C$ and for $u \in V_1$.
\end{definition}

\begin{restatable}{lemma}{ballSob}\label{lem:ball_in_sob}
Let $r>0$ and $d \geq 2$. Then, $\mathcal{B}_{2r, 2}$ is embedded as a proper subspace of  $\mathcal{W}^{r, 2}$. 
\end{restatable}

\begin{proof}
Since $f\in\mathcal{B}_{2r, 2}$, we have $f\in L_2 \prn{\R^d}$, and therefore, $\norm{f}_2<\infty$. Additionally, by \cite{book:6771}(Theorem 5.2), there exists a constant $C_1>0$ such that 
\[
\norm{f}_{\mathcal{W}^{r, 2}} ~\leq~ C_1 \prn{\norm{f}_2 + \abs{f}_{r, 2}}, 
\]
where $C_1$ is dependant on $r$ and the dimension $d$. Therefore, it remains to prove that $\abs{f}_{r, 2}$ is bounded. For this, we use some properties of multivariate function spaces. 

We begin with the following claim that we will use in advance. Let $\omega\in\R^d, d\geq 2$, we have
\begin{equation}\label{eq:w2r}
\abs{\omega}^{2r} ~=~ \norm{\omega}_{l_2}^{2r} ~=~ \prn*{\sum_{m=1}^d \omega_m^2}^{r} ~=~ \sum_{\|\alpha\|_1=r} \binom{r}{\alpha_1,\dots,\alpha_d} \cdot \prn{\omega^{\alpha}}^2, 
\end{equation}
where $\omega^{\alpha}=\prod^{d}_{i=1}\omega^{\alpha_i}$ for $\alpha\in\R^d$.

By Jensen's inequality, 
\begin{equation*}
\begin{aligned}
\abs{f}_{r, 2}^2 ~=~ \prn*{\sum_{\alpha:\abs{\alpha}=r} \norm{D^{\alpha} f}_2}^2 ~\leq~  d^{r}\sum_{\alpha:\abs{\alpha}=r} \norm{D^{\alpha} f}_2^2
\end{aligned}
\end{equation*}
By Parseval's Identity in $\R^d$~\citep{Albrecht2021OperatorTA}:
\[
\norm{D^{\alpha} f}_{2}^2 ~=~ \normalize \int_{\R^d} \abs{ \cF\prn{ D^{\alpha} f}\prn{\omega}}^{2}~\textnormal{d}\omega ~=~ \normalize \int_{\R^d} \abs{\prn{\cF f}\prn{\omega}}^2 {\abs{\omega^{\alpha}}^2}~\textnormal{d}\omega.
\]
Hence, by \eqref{eq:w2r}
\begin{equation*}
\begin{aligned}
\sum_{\alpha:\abs{\alpha}=r} \norm{D^{\alpha} f}_2^2 
~&=~  \sum_{\alpha:\abs{\alpha}=r} \normalize \int_{\R^d} \abs{\prn{\cF f}\prn{\omega}}^2 {\abs{\omega^{\alpha}}^2} ~\textnormal{d}\omega \\ 
~&\leq~ \normalize \int_{\R^d} \abs{\prn{\cF f}\prn{\omega}}^2 \prn*{\sum_{\alpha:\abs{\alpha}=r} \binom{r}{\alpha_1,\dots,\alpha_d} \cdot {\abs{\omega^{\alpha}}^2} }~\textnormal{d}\omega \\ 
~&=~ \normalize \int_{\R^d} \abs{\cF f\prn{\omega}}^2 \abs{\omega}^{2r}~\textnormal{d}\omega.
\end{aligned}
\end{equation*}
where the inequality follows from the fact that $\binom{r}{\alpha_1,\dots,\alpha_d} \geq 1$. Finally, we conclude that
\begin{equation*}
\begin{aligned}
\norm{f}_{\sob{2}} ~\leq~ C_1 \prn{\norm{f}_2 + \abs{f}_{r, 2}}
~&\leq~ C_1 \prn*{\norm{f}_2 + \fr{d^r}{(2\pi)^{d/2}}\left(\smallint_{\R^d} \abs{\prn{\cF f}\prn{\omega}}^2 \abs{\omega}^{2r}~\textnormal{d}\omega\right)^{1/2}}  \\~&\leq~ C_2 \norm{f}_{\mathcal{B}_{2r, 2}}, 
\end{aligned}
\end{equation*}
for some $C_1, C_2 >0$, where the second inequality is due to the definition of $\norm{\cdot}_{\mathcal{B}_{2r, 2}}$ as given in equation \ref{def:ball_norm}.

To see that $\mathcal{B}_{2r, 2}$ is a proper subspace of $\mathcal{W}^{r, 2}$, let us define as an example, the function $f\in\sob{2}\prn*{\R^d}$ through its Fourier transform:

\[
\mathcal{F} f \prn{\omega} = \begin{cases}
  1 & \abs{\omega} \leq 1 \\
  \abs{\omega}^{-\prn{r + (d+\epsilon)/2}} & \abs{\omega}>1 
\end{cases}
\]
for any $\epsilon>0$. 

Indeed, $f\in\sob{2}\prn*{\R^d}$ since:
\begin{equation*}
\begin{aligned}
\int_{\R^d} \abs{\omega}^{2r} \abs{\mathcal{F} f \prn{\omega}}^2 ~\textnormal{d}\omega ~&=~ \int_{\abs{\omega}\leq 1} \abs{\omega}^{2r} \abs{\mathcal{F} f \prn{\omega}}^2 d\omega + \int_{\abs{\omega} > 1} \abs{\omega}^{2r} \abs{\mathcal{F} f \prn{\omega}}^2~\textnormal{d}\omega \\~&\leq~ 1+ \int_{\abs{\omega} > 1} \abs{\omega}^{-\prn{d+\epsilon}}~\textnormal{d}\omega < \infty.
\end{aligned}
\end{equation*}

However, for this example $\mathcal{F} f \prn{\omega} \notin L_1$ in the cases where the dimension is relatively higher than the smoothness index. That is, whenever
\[
r+\prn{d+\epsilon}/2 \leq d \Leftrightarrow 2r + \epsilon \leq d.
\]

This implies that $f$ does not satisfy the condition $\normalize\int_{\R^d} \abs{\mathcal{F} f\prn{\omega}}d\omega \leq 1$ which is required in $\mathcal{B}_{2r, 2}$. We see that the $L_1$ condition on the Fourier transform of functions in $\mathcal{B}_{2r, r}$ allows us to circumvent the curse of dimensionality when approximating in Sobolev spaces. 
\end{proof}

\polyRealize*

\begin{proof}

Let $p_n \prn{x} = \sum_{k=0}^n c_k x^k$ be a polynomial of degree $n$ and $p_{n, i} = \sum_{k=0}^{i} c_k x^k$ its partial sum up to term $i$. We construct a network $\nn{POL}_{n}$ whose $i$th layer satisfies:
\[
\prn{\nn{POL}_{n}}_{i}\prn{x, x, c_0} = \prn{x, x^{i+1}, p_{n,i}(x)}.
\]
At layer $i$, the model weights for the three neurons are defined as:
\[
w_{i, 1} = (1, 0, 0)
,\quad 
a_{i, 2} = 1 \text{   for  } (j_1, j_2) = (1, 2) \text { }
,\quad
w_{i, 3} = (0, c_i, 1)
\]
The rest of the weights are zeros. 

We argue that at any layer $i\geq0$, neuron 1 contains $x$, neuron 2 contains $x^{i+1}$ and neuron 3 contains $p_{n,i}(x)$. Let $y_{{i+1}, 1}, y_{{i+1}, 2}, y_{{i+1}, 3}$ be the three neurons in layer $i+1$. Since the only non-zero weight affecting the first neuron is in $w_{i+1, 1}$, $y_{i+1, 1}=y_{i, 1}=x$ by the assumption. The only non-zero weight affecting the second neuron appears in $a_{i=1, 2}$, and therefore $y_{i+1, 2}=y_{i, 1} y_{i, 2}=x^{i+1}$. Lastly, $y_{i+1, 3}$ depends only on $w_{i+1, 3}$, and so $y_{i+1, 3} = c_{i+1} y_{i, 2} + y_{i, 3}=p_{n,i} \prn{x} + c_{i+1} x^{i+1}: = p_{n,i+1} \prn{x}$. We conclude using the fact that $p_{n,n} \prn{x} = p_n \prn{x}$. 
\end{proof}

\begin{figure}
\begin{center}
\begin{tikzpicture}[x=1.5cm, y=1.5cm, >=stealth]


\node [every neuron/.try, neuron 1/.try, fill=mycolor_orange] (input-1) at (0,2.5-1) { $x$};

\node [every neuron/.try, neuron 2/.try, fill=mycolor_orange] (input-2) at (0,2.5-2) { $x$};

\node [every neuron/.try, neuron 3/.try, fill=mycolor_orange] (input-3) at (0,2.5-3) { \tiny $c_0$};


\node [every neuron/.try, neuron 1/.try] (hidden-1) at (2,2.5-1) { $x$};

\node [every neuron/.try, neuron 2/.try] (hidden-2) at (2,2.5-2) { $x^2$};

\node [every neuron/.try, neuron 3/.try] (hidden-3) at (2,2.5-3) { \tiny $c_0+c_1 x$};


\node [every neuron/.try, neuron 1/.try] (output-1) at (4,2.5-1) { $x$};

\node [every neuron/.try, neuron 2/.try] (output-2) at (4,2.5-2) { $x^{3}$};

\node [every neuron/.try, neuron 3/.try, fill=mycolor] (output-3) at (4,2.5-3) {\small $p_{2, 2} \prn{x}$};


\foreach \l [count=\i] in {1,2,3}
  \draw [<-] (input-\i) -- ++(-1,0)
    node [above, midway] {};




\draw [->] (output-3) -- ++(1,0)
    node [above, midway] {};

\draw [->] (input-1) -- (hidden-1) node [above, midway] {\tiny $(w_{i, 1})_1 = 1$};
\draw [->] (input-1) -- (hidden-2) node [above, midway] {\tiny $a_{i, 2} = 1$};
\draw [->] (input-2) -- (hidden-2) node [above, midway] {\tiny $a_{i, 2} = 1$};
\draw [->] (input-2) -- (hidden-3) node [above, midway] {\tiny $(w_{i, 3})_2 = c_i$};
\draw [->] (input-3) -- (hidden-3) node [above, midway] {\tiny $(w_{i, 3})_3 = 1$};

\draw [->] (hidden-1) -- (output-1) node [above, midway] {\tiny $(w_{i, 1})_1 = 1$};
\draw [->] (hidden-1) -- (output-2) node [above, midway] {\tiny $a_{i, 2} = 1$};
\draw [->] (hidden-2) -- (output-2) node [above, midway] {\tiny $a_{i, 2} = 1$};
\draw [->] (hidden-2) -- (output-3) node [above, midway] {\tiny $(w_{i, 3})_2 = c_i$};
\draw [->] (hidden-3) -- (output-3) node [above, midway] {\tiny $(w_{i, 3})_3 = 1$};

\foreach \l [count=\x from 0] in {Input, Hidden, Output}
  \node [align=center, above] at (\x*2,2) {\l \\ layer};

\end{tikzpicture}
\end{center}
\caption{Illustration of the multiplicative network in Proof of Lemma \ref{lemma:poly_realization}, where the polynomial degree $n=2$.}
\end{figure}
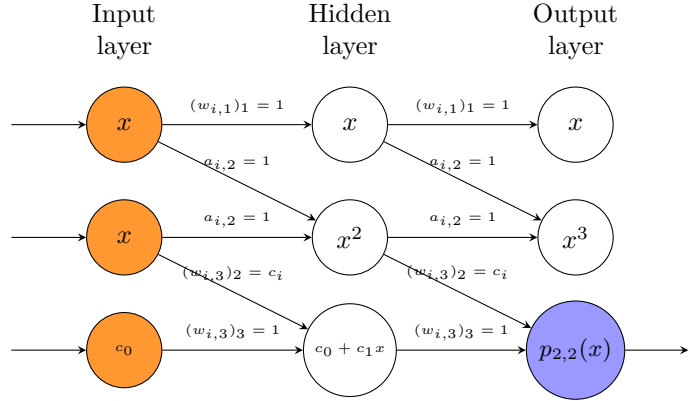

\analyticApprx*

\begin{proof}
Let $M\geq 1$, $s>2$, $C_K >0$, $\epsilon \in (0,1)$ and let $K$ be an analytic function with the required assumptions. As a first step, we approximate $K$ with a polynomial $h_n$ of degree $n\geq2$ (to be defined later). Then, we realize $h_n$ with a deep multiplicative network using Lemma~\ref{lemma:poly_realization}. We recall the polynomial $h_n$ described in Theorem~\ref{thm:chebychev_unique}. For any integer $n\geq 2$, we have
\[
\norm{h_n -K}_{L^{\infty} \prn{\brk{-M, M}}} ~\leq~ \fr{C_K s^{-n}}{s-1} ~=~ \cO\prn{C_K s^{-n}}.
\]
By choosing {$n=\frac{1}{\log_2 s} \log_2 \frac{C_K}{\epsilon\prn{s-1}} \leq \frac{1}{\log_2 s} \log_2 \frac{C_K}{\epsilon}$ } we obtain the following relation:
 
\begin{equation}
\begin{aligned}
\frac{\epsilon \prn{s-1}}{C_K} ~&=~ 2^{\log_2\prn{\frac{\epsilon \prn{s-1}}{C_K}}} \\
~&=~ 2^{-\frac{1}{\log_2 s}\prn{\log_2\prn{\frac{C_K}{\epsilon \prn{s-1}}}}\log_2 s} \\
~&=~ s^{-\frac{1}{\log_2 s}\prn{\log_2\prn{\frac{C_K}{\epsilon \prn{s-1}}}}} ~=~ s^{-n}.
\end{aligned}
\end{equation}
In particular,
\[
\norm{h_n -K}_{L^{\infty} \prn{\brk{-M, M}}} ~\leq~ \fr{C_K s^{-n}}{s-1} ~\leq~ \epsilon,
\]
for $s\geq 1$. Given that $h_n$ is a polynomial of degree $n$, by Lemma~\ref{lemma:poly_realization}, there exists $\nn{POL}$ and $c_0 = h_{n, 0}$ such that  $h_n \prn{x} =\nn{POL}\prn{x, x, c_0}$, and achieve the desired result.
\end{proof}

Let us now recall Maurey's Theorem ~\citep{SAF_1980-1981____A5_0,Milman1986} which will assist us in the proof of Theorem \ref{thm:bandlimitedApprx}.

\begin{lemma}[Maurey's theorem]\label{lem:maurey}
Let $\mathcal{V}$ be a Hilbert space with norm $\norm{\cdot}_{\mathcal{V}}$. Suppose there exists $Q\subset \mathcal{V}$ such that for every $q\in Q$, $\norm{q}_{\mathcal{V}}\leq b$  for some $b>0$. Then, for every $f$ in the convex hull of $Q$ and every integer $n\geq 1$, there exists a $f_n$ in the convex hull of $n$ points in $Q$ and a constant $c>b^2-\norm{f}_{\mathcal{V}}^2$ such that $\norm{f_n -f}_{\mathcal{V}}^2 \leq \frac{c}{n}$.
\end{lemma}

\bandlimitedApprx*

\begin{proof} Let $f\in \cH_{K, M}\prn{B}$. Further, let $F\prn{\omega}=\abs{F\prn{\omega}} \cdot \exp(i \theta\prn{\omega})$, the polar representation of $F\prn{\omega}$. The following holds:

\begin{equation}
\begin{aligned}
f\prn{x} ~&=~ \text{Re}\prn*{\int_{\brk{-M, M}^d} F\prn{\omega} K\prn{\omega \cdot x}~\textnormal{d}\omega} \\ 
~&=~ \text{Re}\prn*{\int_{\brk{-M, M}^d} C_F \exp\prn{i \theta\prn{\omega}} K\prn{\omega \cdot x} \frac{\abs{F\prn{\omega}}}{C_F}~\textnormal{d}\omega} \\ 
~&=~ \int_{\brk{-M, M}^d} {C_F \brk*{ \cos\prn{\theta\prn{\omega}} K_R \prn{\omega \cdot x} - \sin\prn{\theta\prn{\omega}} K_I \prn{\omega \cdot x} } \frac{\abs{F\prn{\omega}}}{C_F}~\textnormal{d}\omega},
\end{aligned}
\end{equation}

where $K_R, K_I$ are the real and imaginary parts of $K$ respectively. Given that the integral represents $f$ as an infinite convex combination of functions in 
\begin{equation*}
Q_{K, M} = \left\{\gamma\brk*{\cos\prn{\beta}K_R {\prn{\omega\cdot x}} - \sin\prn{\beta} K_I \prn{\omega \cdot x}} \mathrel{\Big|} \abs{\gamma}\leq C_F, \beta\in\R, \omega\in\brk{-M, M}^d\right\},
\end{equation*}

then $f$ is in the closure of the convex hull of $Q_{K, M}$. Due to the fact that $x\in\brk{0, 1}^d$ and $\omega\in\brk{-M, M}^d$, $t=\omega\cdot x \in \brk{-dM, dM}$. By the definition of $D_K$, functions in $Q_{K, M}$ are bounded in the $L^2\prn{B}$-norm by $2 C_F D_K\leq 2 C_F $. Using Lemma~\ref{lem:maurey}, there exist real coefficients $\crl{b_j}$ and $\crl{\beta_j}$, and vectors $\omega_j \in \brk{-M, M}^d$ for $1\leq j\leq \ceil{1/\epsilon_0^2}$, such that: 
\[
f_{\epsilon_0} \prn{x} = \sum_{j=1}^{\ceil{1/\epsilon_0^2}} {b_j \brk*{\cos\prn{\beta_j} K_R \prn{\omega_j \cdot x}- \sin\prn{\beta_j} K_I \prn{\omega_j \cdot x}}}, \quad \sum_{j=1}^{\ceil{1/\epsilon_0^2}} {\abs{b_j}} \leq C_F,
\]

for $0<\epsilon_0 < 1$ to be defined at a later time, such that 
\[
\norm{f_{\epsilon_0} \prn{x} - f\prn{x}}_{L^2 \prn{B}} \leq 2 C_F \epsilon_0.
\]

We are now ready to approximate $f_{\epsilon_0} \prn{x}$ using a deep multiplicative neural network $\nn{MBL}$ on $B$. We notice that $K_R \prn{x}$ and $K_I \prn{x}$ are analytic kernels that hold the assumptions of Theorem \ref{thm:deep_som_for_analytic}. They can therefore be approximated to $\epsilon_0$ error using networks $\nn{RMA}$, $\nn{IMA}$ of depth and number of neurons
\[
G_{\epsilon_0} \sim L_{\epsilon_0} = \cO\prn{\frac{1}{\log_2 s} \log_2 \frac{C_K}{\epsilon_0}}, 
\]
where RMA stands for `Real Multiplicative Analytic' and IMA stands for `Imaginary Multiplicative Analytic'. Let us define the multiplicative network $\nn{MBL}\prn{x}$ by
\[
\nn{MBL}\prn{x} = \sum_{j=1}^{\ceil{1/\epsilon_0^2}} {b_j \brk*{\cos \prn{\beta_j} {\nn{RMA}} \prn{\omega_j \cdot x}- \sin\prn{\beta_j} \nn{IMA} \prn{\omega_j \cdot x}}}.
\]
This network has depth $L_{\epsilon_0}=\cO\prn{\frac{1}{\log_2 s} \log_2 \frac{C_K}{\epsilon_0}}$ and $G_{\epsilon_0}=\cO\prn{\frac{1}{{\epsilon_0^2} \log_2 s} \log_2 \frac{C_K}{\epsilon_0}}$ neurons.

\begin{equation*}
\begin{aligned}
\norm{\nn{MBL}\prn{x}- f_{\epsilon_0}\prn{x}}_{L^\infty \prn{B}} ~&\leq~ \sum_{j=1}^{\ceil{1/\epsilon_0^2}} {\abs{b_j}\norm{\nn{RMA} \prn{\omega_j \cdot x}- 
K_R \prn{\omega_j \cdot x}}_{L^\infty \prn{B}}} \\
&\quad+ \sum_{j=1}^{\ceil{1/\epsilon_0^2}} {\abs{b_j}\norm{\nn{IMA} \prn{\omega_j \cdot x}- K_I \prn{\omega_j \cdot x}}_{L^\infty \prn{B}}} \\ ~&\leq~ 2 C_F \epsilon_0,
\end{aligned}
\end{equation*}
that implies 
\[
\norm{\nn{MBL} \prn{x} -f(x)}_{L^2 \prn{B}} \leq \norm{\nn{MBL} \prn{x} - f_{\epsilon_0}\prn{x}}_{L^2 \prn{B}} + \norm{f_{\epsilon_0} \prn{x} - f\prn{x}}_{L^2 \prn{B}} \leq 4 C_F {\epsilon_0}, 
\]
where the last inequality is due to the fact that 
\[
\norm{g}_{L^2 \prn{B}}= \prn*{\int_{B}\abs{g}^2}^{1/2} \leq \norm{g}_{L^\infty \prn{B}} = \norm{g}_{L^\infty \prn{B}}.
\]

Taking $\epsilon_0 = \epsilon/\prn{4 C_F}$ achieves the sought result. 
\end{proof}

We further investigate how the constant $C_K$ from Theorem \ref{thm:bandlimitedApprx} behaves as functions of $M$ and $s$. Let $K\prn{x} = \exp\prn{i x}$ be an example kernel, $x\in\brk{-M, M}^d$. We notice that $a^{ M}_s =  M(s+s^{-1})/2$ for $s>2$, is the larger axis, and therefore the maximal norm of $K$ on $E^{ M}_s$ is given by $K(a^{M}_s)$:
{\[
\max_{x} \abs{K\prn{x}} \leq \exp\prn{M \fr{s+s^{-1}}{2}} = C_K (s, M).
\]}

In our setting, inputs are in the interval $t = \omega\cdot x \in\brk{-d M, d M}$, so we may use the bounding constant
\[
\max_{t} \abs{K\prn{t}} ~\leq~ \exp\prn{dM \fr{s+s^{-1}}{2}} = C_K (s, dM).
\]
The resulting network $\nn{MBL}$ then has depth 
\[
L_{\epsilon} ~=~ \cO\prn*{\frac{1}{\log_2 s}\prn*{dM \frac{s+s^{-1}}{2} + \log_2 \frac{C_F }{\epsilon}}}
\]
and
\[
G_{\epsilon} = \cO\prn*{\frac{C_F^2 }{\epsilon^2 \log_2 s}\prn*{dM \frac{s+s^{-1}}{2} + \log_2 \frac{C_F }{\epsilon}}}
\]
neurons. In this scenario, we see that both a large band $M$ and a large dimension $d$ will linearly affect the first term. 

\strictSobRS*

\begin{proof} Let $f\in\mathcal{B}_{2r, 2}$, and $M>1$. Similar to the proof of Theorem \ref{thm:strict_sobolev_ms}, we define the bandlimiting of $f:\R^d\rightarrow\R$ by
\[
f_M ~=~ \mathcal{F}^{-1} \prn{\mathcal{F} {f}\mathbbm{1}_{\brk{-M, M}^d}}.
\]
Let us define $F:\brk{-M, M}^d \rightarrow \C$:
\[
F\prn{\omega} ~=~ \normalize \prn{\mathcal{F} f}\prn{\omega}, 
\]
and $K\prn{x} = \exp(i x)$. We then have the following identity:
\[
f_M (x) ~=~ \int_{\brk{-M, M}^d} F \prn{\omega}K\prn{\omega \cdot x} ~\textnormal{d}\omega ~=~ \int_{\brk{-M, M}^d} \fr{1}{(2\pi)^d} \mathcal{F}f \prn{\omega}  \exp\prn{i\omega\cdot x} ~\textnormal{d}\omega.
\]

It is then easy to see that $f_M \in {\cH_{K, M}\prn{B}}$. We choose $M= \prn{2/\epsilon}^{1/r}$. Using the same derivations as in the proof of Theorem \ref{thm:strict_sobolev_ms}, we seek to approximate $f_M$ with a network $\nn{RS}$ such that 
\[
\norm{f_M - \nn{RS}}_{L_2 \prn{B}} ~\leq~ \epsilon/2.
\]
We then arrive at
\begin{equation}
\begin{aligned}
\norm{f - \nn{RS}}_{L_2 \prn{B}} 
~&\leq~ \norm{f - f_M}_{L_2 \prn{B}} + \norm{f_M - \nn{RS}}_{L_2 \prn{B}} \\ 
~&\leq~ M^{-r} + \epsilon/2 ~\leq~ \epsilon.
\end{aligned}
\end{equation}
We consider that {$C_{F} = \int_{\brk{-M, M}^d} \abs{F\prn{\omega}}~\textnormal{d}\omega = \normalize\int_{\brk{-M, M}^d} \abs{\cF{f}\prn{\omega}}~\textnormal{d}\omega \leq 1$}, where the inequality is due to the definition of $\mathcal{B}_{2r, 2}$. The kernel {$K\prn{t} = \exp\prn{i t}$} takes as input $t=\omega \cdot x$, for $x \in B$ and $\omega\in\brk{-M, M}^d$. Therefore, $t\in\brk{-d M, d M}$, and $K:\brk{-d M, d M}\rightarrow\R$. We note that $K$ is continuable to the Bernstein 4-ellipse $E_4^{d M}$. We notice that $a^{d M}_4 = d M(4+4^{-1})/2$ is the larger axis, and therefore the maximal norm of $K$ on $E^{d M}_s$ is given by $K(a^{dM}_4)$:
\[
C_K ~=~ \max_{t} \abs{K\prn{t}} ~\leq~ \exp\prn{dM \fr{4+4^{-1}}{2}}.
\]
Further, for any $t\in\R$ we have 
 $\abs{K(t)}\leq 1 = D_K$.

Using Theorem 3.2 from \citep{Montanelli2021DeepRN} we can construct a deep ReLU network $\nn{RS}$ such that 
\[
\norm{f_M - \nn{RS}}_{L_2 \prn{B}} ~\leq~ \epsilon/2
\]
whose depth is
\begin{equation}
\begin{aligned}
L_{\epsilon} ~&\leq~ C_1 \frac{1}{\log_2^2 4} \log_2^2 {\frac{2 C_F C_K }{\epsilon}} \\ ~&\leq~ C_2 \frac{1}{\log_2^2 4} \prn*{\log_2 {\frac{ {\exp\prn{d M \frac{4 + 4^{-1}}{2}}}}{\epsilon}}}^2 \\ ~&\leq~ 12 C_2 \prn*{d \prn*{\frac{2}{\epsilon}}^{\frac{1}{r}}  - \log_2 {\frac{1}{\epsilon}}}^2 \\ ~&\leq~ C_3 d^2 \epsilon^{-\frac{2}{r}},
\end{aligned}
\end{equation}
for some constants $C_1, C_2, C_3 > 0$. In addition,  the number of neurons can be bounded by 
\begin{equation}
\begin{aligned}
G_{\epsilon} ~&\leq~ C'_1 {C_F^2} \frac{1}{\epsilon^2 \log_2^2 4} \log_2^2 {\frac{2 C_K C_F}{\epsilon}} \\ 
~&\leq~ C'_2 \frac{1}{\epsilon^2 \log_2^2 4} \prn*{\log_2 {\frac{ {\exp\prn{d M \frac{4 + 4^{-1}}{2}}}}{\epsilon}}}^2 \\ ~&\leq~   \frac{12 C'_2}{\epsilon^2} \prn*{d \prn*{\frac{2}{\epsilon}}^{\frac{1}{r}}  - \log_2 {\frac{1}{\epsilon}}}^2 \\ 
~&\leq~ C'_3 d^2 \epsilon^{-\prn{2 + \frac{2}{r}}},
\end{aligned}
\end{equation}
for some constants $C'_1, C'_2, C'_3 > 0$.
\end{proof}


\end{document}


\maketitle

\appendix

\section{Additional Experiments}\label{app:additional_experiments}


{\bf Auxiliary experiments on the effective depth.\enspace} We repeated the experiment in Fig.~1 in the main text. In Figs.~\ref{fig:cifar10_convnet_400_ext}-\ref{fig:fashion_mlp_100} we plot the results of the same experiment, with different networks and datasets (see captions). As can be seen, in all cases, for networks deeper than a threshold we obtain (near perfect) NCC separability in all of the top layers. Furthermore, similar to the results in Fig.~1 in the main text, the degree of neural collapse improves with the network's depth.

{\bf Auxiliary experiments with noisy labels.\enspace} We repeated the experiment in Figs.~1-2 in the main text. In Figs.~\ref{fig:cifar10_noisy_mlp} and~\ref{fig:fashion_noisy_conv} we plot the results of the same experiment, with different networks and datasets (see captions). As can be seen, the effective NCC depth of a neural network tends to increase as we train with increasing amounts of corrupted labels.

{\bf The effect of the width on intermediate collapse.\enspace} As an additional experiment, we studied the effect of the width on intermediate neural collapse. In Fig.~\ref{fig:mnist_convnet_depth_10} we plot the results of this experiment, for CONV-10-$H$ networks, with $H=20,40,80,160,320$. In each row we consider a different evaluation metric (the CDNV on the train and test data and the NCC classification accuracy on the train and test data) and in each column, we consider a neural network of a different width. As can be seen,
intermediate neural collapse strengthens when increasing the width of the neural network, on both train and test data.

\subsection{Estimating the Generalization Bound}\label{sec:est}

In Prop.~\ref{prop:bound} we introduce a generalization bound for deep neural networks. In this section we empirically estimate the bound and demonstrate non-trivial estimations of the test performance.

The results are summarized in Tab.~\ref{tab:GenBoundEst}. We report (an estimation of) the mean test error, the choices of $\epsilon$ and $p$, the estimations of the first term in Eq.~\ref{eq:bound} and the full bound in Eq.~\ref{eq:bound}. We experiment with multiple values of $p$ depending on the hardness of the given task. We chose $\epsilon=0.005$ as our default threshold for deciding whether we have separation or not.

{\bf Estimating the bound.\enspace} We would like to estimate the first term in the bound,
\begin{equation}
\P_{S_1,S_2,\tY_2}\left[\E_{\gamma}[\cd^{\epsilon}_{S_1}(h^{\gamma}_{S_1})] \geq \cd^{\epsilon}_{\min}(\cG,S_1\cup \tS_2)\right].   
\end{equation}
According to Prop.~\ref{prop:bound2} in order to estimate this term we need to generate i.i.d. triplets $(S^i_1,S^i_2,\tY^i_2)$. Since we have a limited access to training data, we use a variation of cross-validation and generate $k_1=5$ i.i.d. disjoint splits $(S^i_1,S^i_2)$ of the training data $S$. For each one of these pairs, we generate $k_2=3$ corrupted labelings $\tY^{ij}_2$. We denote by $\tS^{ij}_2$ the set obtained by replacing the labels of $S^i_2$ with $\tY^{ij}_2$ and $\tS^{ij}_{3} := S^i_1 \cup \tS^{ij}_2$. 

As a first step, we would like to estimate $\E_{\gamma}[\cd^{\epsilon}_{S^i_1}(h^{\gamma}_{S^i_1})]$ for each $i \in [k_1]$. For this purpose, we randomly select $T_1=5$ different initializations $\gamma_1,\dots,\gamma_{T_1}$ and for each one, we train the model $h^{\gamma_t}_{S^i_1}$ using the training protocol described in Sec.~4.1. Once trained, we compute $\cd^{\epsilon}_{S_1}(h^{\gamma_t}_{S^i_1})$ for each $t \in [T_1]$ (see Def.~1) and approximate $\E_{\gamma}[\cd^{\epsilon}_{S^i_1}(h^{\gamma}_{S^i_1})]$ using $d_i := \frac{1}{T_1} \sum^{T_1}_{t=1}\cd^{\epsilon}_{S^i_1}(h^{\gamma_t}_{S^i_1})$. 

As a next step, we would like to evaluate $\bI[d_i \geq \cd^{\epsilon}_{\min}(\cG, \tS^{ij}_3)]$. We notice that $d_i \geq \cd^{\epsilon}_{\min}(\cG, S^i_1 \cup \tS^i_2)$ if and only if there is a $d_i$-layered neural network $f = g^{d_i} \circ \dots \circ g^1$ for which $\textnormal{err}_{\tS^{ij}_3}(\hat{h}) \leq \epsilon$, where $\hat{h}(x) := \argmin_{c \in [C]} \|f(x)-\mu_{f}(S_c)\|$. In general, computing this boolean value is computationally hard. Therefore, to estimate this boolean value, we simply train a $(d_i+1)$-layered network $h = e \circ f$ and check whether its penultimate layer is $\epsilon$-NCC separable, i.e., $\textnormal{err}_{\tS^{ij}_3}(\hat{h}) \leq \epsilon$, where $\hat{h}(x) := \argmin_{c \in [C]} \|f(x)-\mu_{f}(S_c)\|$. If SGD implicitly optimizes neural networks to maximize NCC separability as observed in~\citep{Papyan24652} (and also in this paper), we should expect to obtain $\epsilon$-NCC separability in the penultimate layer if that is possible with a $d_i$-layered network. Since training might be non-optimal, to obtain a robust estimation, we train $T_2=5$ models $h_t = e_t \circ f_t$ of depth $d_i+1$ and pick the one with the best NCC separability in its penultimate layer. Namely, we replace $\cd^{\epsilon}_{\min}(\cG, \tS^{ij}_3)$ with $\min_{t \in [T_2]} \cd^{\epsilon}_{\tS^{ij}_3}(h_t)$ and estimate $\bI [d_i \geq \cd^{\epsilon}_{\min}(\cG, \tS^{ij}_3)]$ using $\bI[d_i \geq \min_{t \in [T_2]} \cd^{\epsilon}_{\tS^{ij}_3}(h_t)]$.

Our final estimation is the following
\begin{equation}
\label{eqn:prob_term_estimate}
\frac{1}{k_1}\sum^{k_1}_{i=1}\frac{1}{k_2} \sum^{k_2}_{j=1}\bI\left[d_i \geq \min_{t \in [T_2]} \cd^{\epsilon}_{\tS^{ij}_3}(h_t)\right] \approx \P_{S_1,S_2,\tY_2}\left[\E_{\gamma}[\cd^{\epsilon}_{S_1}(h^{\gamma}_{S_1})] \geq \cd^{\epsilon}_{\min}(\cG,S_1\cup \tS_2)\right].
\end{equation}
In order to estimate the bound we assume that $\delta^1_{m}$ and $\delta^{2}_{m,p,\alpha}$ are negligible constants and that $\alpha=1$. The estimation of the bound is given by the sum of the LHS in Eq.~\ref{eqn:prob_term_estimate} and $p$.

{\bf Estimating the mean test error.\enspace} To estimate the mean test error, $\E_{S_1,\gamma}[\textnormal{err}_{P}(h^{\gamma}_{S_1})]$, as typically done in machine learning, we replace the population distribution $P$ with the test set $S_{test}$ and we replace the expectation over $S_1$ and $\gamma$ with averages across the $k_1=5$ random selections of $\{S^i_1\}^{k_1}_{i=1}$ and $T_1=5$ random selections of $\{\gamma_t\}^{T_1}_{t=1}$. Namely, we compute the following $\frac{1}{k_1} \sum^{k_1}_{i=1} \frac{1}{T_1} \sum^{T_1}_{t=1} \textnormal{err}_{S_{test}}(h^{\gamma_t}_{S^i_1}) \approx \E_{S_1,\gamma}[\textnormal{err}_{P}(h^{\gamma}_{S_1})]$.

\begin{table}
  \centering
  \renewcommand{\arraystretch}{1.2}
  \begin{tabular}{ccccccccccc}
  \hline
    Dataset & \multicolumn{3}{c}{MNIST} & \multicolumn{3}{c}{Fashion MNIST} & \multicolumn{3}{c}{CIFAR10} \\
    \hline 
    Architecture & \multicolumn{3}{c}{CONV-10-50} & \multicolumn{3}{c}{CONV-10-100} &
    \multicolumn{3}{c}{CONV-16-100} \\
    $\E_{S_1,\gamma}[\textnormal{err}_{P}(h^{\gamma}_{S_1})]$ & \multicolumn{3}{c}{0.0075} & \multicolumn{3}{c}{0.0996} & \multicolumn{3}{c}{0.2676} \\
    \hline
    $\epsilon$ & \multicolumn{3}{c}{0.005} & \multicolumn{3}{c}{0.005} & \multicolumn{3}{c}{0.005} \\ 
    $\E_i[d_i], \sigma(d_i)$ & \multicolumn{3}{c}{5.91, 0.434} & \multicolumn{3}{c}{6.64, 0.344} & \multicolumn{3}{c}{6.87, 0.34} \\
    $p$ & 0.05 & 0.075 & 0.1 & 0.05 & 0.15 & 0.2 & 0.4 & 0.45 & 0.5 \\
    The estimation in Eq.~\ref{eqn:prob_term_estimate} & 1.0 & 0.4 & 0.0 & 1.0 & 0.6 & 0.0 & 0.27 & 0.27 & 0.2 \\
    Bound & 1.05 & 0.475 & {\bf 0.1} & 1.05 & 0.75 & {\bf 0.2} & {\bf 0.66} & 0.72 & 0.7 \\
    \hline
  \end{tabular}
  \caption{{\bf Estimating the bound in Prop.~\ref{prop:bound}.} See Sec.~\ref{sec:est} for details.}
  \label{tab:GenBoundEst}
\end{table}



  
 \caption{{\bf Intermediate neural collapse of CONV-10-$H$ trained on MNIST when varying the width.} See Fig.~1 in the main text for details.} 
 \label{fig:mnist_convnet_depth_10}
\end{figure*} 

\clearpage

\section{Proofs}

\begin{restatable}{proposition}{bound}\label{prop:bound} Let $m \in \mathbb{N}$, $p\in (0,1/2)$, $\alpha \in (0,1)$ and $\epsilon\in (0,1)$. Assume that the error of the learning algorithm is $\delta^1_m$-uniform. Assume that $S_1,S_2 \sim P_B(m)$. Let $h^{\gamma}_{S_1}$ be the output of the learning algorithm given access to a dataset $S_1$ and initialization $\gamma$. Then, 
\begin{equation}\label{eq:bound}
\begin{aligned}
\E_{S_1}\E_{\gamma}[\textnormal{err}_P(h^{\gamma}_{S_1})] ~&\leq~ \P_{S_1,S_2,\tY_2}\left[\E_{\gamma}[\cd^{\epsilon}_{S_1}(h^{\gamma}_{S_1})] ~\geq~ \cd^{\epsilon}_{\min}(\cG,S_1\cup \tS_2)\right]\\ 
&\quad+ (1+\alpha)~ p + \delta^1_m + \delta^2_{m,p,\alpha},
\end{aligned}
\end{equation}
where $\tY_2=\{\ty_i\}^{m}_{i=1}$ is uniformly selected to be a set of labels that disagrees with $Y_2$ on $p m$ values.
\end{restatable}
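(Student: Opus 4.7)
The strategy is in three movements: (i) replace the population error by an empirical error on a fresh i.i.d.\ sample $S_2$ using the uniformity assumption; (ii) split on the ``bad'' event that the learner's average effective depth on $S_1$ already matches the minimum depth needed to $\epsilon$-NCC separate $S_1\cup\tS_2$; and (iii) on the complementary ``good'' event, use the randomness of the label corruption to force the empirical error to be close to $p$.

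For (i), I would apply the $\delta^1_m$-uniform error assumption to pass from $\E_{S_1}\E_\gamma[\textnormal{err}_P(h^\gamma_{S_1})]$ to $\E_{S_1,S_2}\E_\gamma[\textnormal{err}_{S_2}(h^\gamma_{S_1})]$ at an additive cost of $\delta^1_m$. For (ii), introduce the indicator of the event $A := \{\E_\gamma[\cd^\epsilon_{S_1}(h^\gamma_{S_1})] \geq \cd^\epsilon_{\min}(\cG,S_1\cup\tS_2)\}$. Bounding the error trivially by $1$ on $A$ contributes the $\P[A]$ term. On $A^c$, the expected effective depth on $S_1$ is strictly below $\cd^\epsilon_{\min}(\cG,S_1\cup\tS_2)$; averaging over $\gamma$ shows that with a nontrivial fraction of $\gamma$'s the per-initialization depth $\cd^\epsilon_{S_1}(h^\gamma_{S_1})$ is itself strictly below $\cd^\epsilon_{\min}(\cG,S_1\cup\tS_2)$. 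For such $\gamma$, the prefix of $h^\gamma_{S_1}$ of depth $\cd^\epsilon_{S_1}(h^\gamma_{S_1})$ is a member of $\cG$ whose depth is insufficient, by the very definition of $\cd^\epsilon_{\min}$, to $\epsilon$-NCC separate $S_1\cup\tS_2$, even though it already $\epsilon$-NCC separates $S_1$ by construction. The failure is therefore concentrated on the $\tS_2$ half, giving a quantitative lower bound on the NCC error there that rules out the classifier being ``consistent with $\tY_2$.''

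For (iii), use that $\tY_2$ is uniform over the $\binom{m}{pm}$ labelings disagreeing with $Y_2$ on exactly $pm$ indices. Pointwise, $|\textnormal{err}_{S_2}(h)-\textnormal{err}_{\tS_2}(h)| \leq p$, since only the $pm$ corrupted indices can contribute to the difference. Moreover, conditioning on $h^\gamma_{S_1}$ and $(S_1,S_2)$, the random set of corrupted indices is a uniform $pm$-subset, so Hoeffding-type concentration for sampling without replacement with slack parameter $\alpha$ gives $\textnormal{err}_{\tS_2}(h^\gamma_{S_1}) \geq (1-\alpha)p + (1-2p)\,\textnormal{err}_{S_2}(h^\gamma_{S_1})$ except on an event of probability at most $\delta^2_{m,p,\alpha}$. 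Combined with the NCC-failure statement from step (ii) (which forces the $\tS_2$-error not to fall below the value attained by a classifier consistent with $Y_2$, namely $p$), inverting this inequality yields $\textnormal{err}_{S_2}(h^\gamma_{S_1}) \leq (1+\alpha)p$ on the good event. Summing the four contributions $\P[A]$, $(1+\alpha)p$, $\delta^1_m$ and $\delta^2_{m,p,\alpha}$ gives the claim.

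The main obstacle will be step (ii): translating the combinatorial statement ``the prefix of depth $\cd^\epsilon_{S_1}(h^\gamma_{S_1})$ cannot $\epsilon$-NCC separate $S_1\cup\tS_2$'' into a quantitative bound on the error of the full network $h^\gamma_{S_1}$ on $\tS_2$. This requires carefully exploiting the coupling between top-layer classification and the NCC classifier induced by the separating prefix (the neural-collapse phenomenon leveraged throughout the paper), together with a Markov-style argument over $\gamma$ that passes from the expectation inequality defining $A^c$ to a per-$\gamma$ statement on a sufficiently large subset of initializations.
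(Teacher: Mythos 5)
Your overall decomposition (population-to-empirical via the fresh sample $S_2$, a trivial bound of $1$ on the event $\{\E_{\gamma}[\cd^{\epsilon}_{S_1}(h^{\gamma}_{S_1})] \geq \cd^{\epsilon}_{\min}(\cG,S_1\cup \tS_2)\}$, and a $(1+\alpha)p$ bound on its complement) matches the paper's skeleton, but the crucial step --- showing $\textnormal{err}_{S_2}(h^{\gamma}_{S_1}) \leq (1+\alpha)p$ on the good event --- is where your argument has a genuine gap, and it is not the argument the paper uses. The paper proceeds by contradiction: if $\textnormal{err}_{S_2}(h^{\gamma}_{S_1}) = q > (1+\alpha)p$, then by the uniformity assumption this holds for \emph{every} $\gamma$ (no Markov argument over a ``nontrivial fraction'' of initializations is needed or used), and the network's own predictions on $S_2$ define a relabeling $\hat{Y}_2$ disagreeing with $Y_2$ on $qm$ uniformly placed labels which the network already $\epsilon$-NCC separates at depth $\cd^{\epsilon}_{S_1}(h^{\gamma}_{S_1})$. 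This forces $\E_{\hat{Y}_2}[\cd^{\epsilon}_{\min}(\cG,S_1\cup\hat{S}_2)] \leq \E_{\gamma}[\cd^{\epsilon}_{S_1}(h^{\gamma}_{S_1})] < \cd^{\epsilon}_{\min}(\cG,S_1\cup\tS_2)$, contradicting the exclusion of the event whose probability \emph{defines} $\delta^2_{m,p,\alpha}$ (that some heavier corruption level $q \geq (1+\alpha)p$ needs less expected depth than the $p$-corruption at hand). You instead treat $\delta^2_{m,p,\alpha}$ as a concentration-failure probability for the random corruption locations, which is not its role here.

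Concretely, two steps of your plan do not go through. First, your step (ii) tries to convert ``the depth-$\cd^{\epsilon}_{S_1}(h^{\gamma}_{S_1})$ prefix cannot $\epsilon$-NCC separate $S_1\cup\tS_2$'' into a bound on $\textnormal{err}_{S_2}(h^{\gamma}_{S_1})$; but that failure concerns the \emph{corrupted} labels $\tY_2$, which the network never interacts with, and there is no mechanism in your sketch linking it to the error against the true labels $Y_2$. Second, your step (iii) derives a \emph{lower} bound $\textnormal{err}_{\tS_2}(h) \geq (1-\alpha)p + (1-2p)\,\textnormal{err}_{S_2}(h)$ and proposes to combine it with another \emph{lower} bound $\textnormal{err}_{\tS_2}(h) \geq p$ from step (ii); two lower bounds on $\textnormal{err}_{\tS_2}$ cannot be ``inverted'' into an upper bound on $\textnormal{err}_{S_2}$, so the conclusion $\textnormal{err}_{S_2}(h^{\gamma}_{S_1}) \leq (1+\alpha)p$ does not follow. (A minor additional misattribution: the passage $\E_{S_1}\E_{\gamma}[\textnormal{err}_P(h^{\gamma}_{S_1})] = \E_{S_1,S_2}\E_{\gamma}[\textnormal{err}_{S_2}(h^{\gamma}_{S_1})]$ is exact since $S_2$ is an independent sample; the $\delta^1_m$ term pays for the probability that the mistakes of $h^{\gamma}_{S_1}$ fail to be uniformly located on $S_2$, which is what licenses the ``for one $\gamma$ implies for all $\gamma$'' step in the contradiction, not for the population-to-empirical substitution.)
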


\begin{proof}
Let $S_1=\{(x^1_i,y^1_i)\}^{m}_{i=1}$ and $S_2 = \{(x^2_i,y^2_i)\}^{m}_{i=1}$ be two balanced datasets. Let $\epsilon >0$, $p>0$ and $q \geq (1+\alpha)~ p$. Let $\tY_2$ and $\hat{Y}_2$ be a uniformly selected set of labels that disagree with $Y_2$ on $p m$ and $q m$ randomly selected labels (resp.). We denote by $\tS_2$ and $\hat{S}_2$ the relabeling of $S_2$ with the labels in $\tY_2$ and in $\hat{Y}_2$ (resp.). We define four different events,
\begin{equation}
\begin{aligned}
A_1 ~&=~ \{(S_1,S_2,\tY_2) \mid \exists~q \geq (1+\alpha)~p:~\cd^{\epsilon}_{\min}(\cG,S_1\cup \tS_2) > \E_{\hat{Y}_2}[\cd^{\epsilon}_{\min}(\cG,S_1\cup \hat{S}_2)]\} \\
A_2 ~&=~ \{(S_1,S_2) \mid \textnormal{the mistakes of } h^{\gamma}_{S_1} \textnormal{ are not uniform over } S_2\} \\
A_3 ~&=~ \{(S_1,S_2,\tY_2) \mid (S_1,S_2,\tY_2)\notin A_1\cup A_2 \textnormal{ and } \E_{\gamma}[\cd^{\epsilon}_{S_1}(h^{\gamma}_{S_1})] ~<~ \cd^{\epsilon}_{\min}(\cG,S_1\cup \tS_2) \} \\
A_4 ~&=~ \{(S_1,S_2,\tY_2) \mid (S_1,S_2,\tY_2)\notin A_1\cup A_2 \textnormal{ and } \E_{\gamma}[\cd^{\epsilon}_{S_1}(h^{\gamma}_{S_1})] ~\geq~ \cd^{\epsilon}_{\min}(\cG,S_1\cup \tS_2) \} \\
B_1 ~&=~ \{(S_1,S_2,\tY_2) \mid \E_{\gamma}[\cd^{\epsilon}_{S_1}(h^{\gamma}_{S_1})] ~\geq~ \cd^{\epsilon}_{\min}(\cG,S_1\cup \tS_2) \} \\
\end{aligned}
\end{equation}
By the law of total expectation
\begin{equation}
\begin{aligned}
\E_{S_1}\E_{\gamma}[\textnormal{err}_{P}(h^{\gamma}_{S_1})] ~&=~ \E_{S_1,S_2}\E_{\gamma}[\textnormal{err}_{S_2}(h^{\gamma}_{S_1})] \\
~&=~ \sum^{4}_{i=1} \P[A_i] \cdot \E_{S_1,S_2,\tY_2}[\E_{\gamma}[\textnormal{err}_{S_2}(h^{\gamma}_{S_1})] \mid A_i] \\
~&\leq~ \P[A_1] + \P[A_2] + \E_{S_1,S_2,\tY_2}[\textnormal{err}_{S_2}(h^{\gamma}_{S_1}) \mid A_3] + \P[B_1],
\end{aligned}
\end{equation}
where the last inequality follows from $\textnormal{err}_{S_2}(h^{\gamma}_{S_1})\leq 1$, $\P[A_3]\leq 1$ and $A_4 \subset B_1$. 

We would like to upper bound each one of the above terms. First, we notice that since the mistakes of the network are $\delta^1_m$-uniform, $\P[A_2]\leq \delta^1_m$. In addition, by definition $\P[A_1] \leq \delta^2_{m,p,\alpha}$.

As a next step, we upper bound $\E_{S_1,S_2,\tY_2}[\textnormal{err}_{S_2}(h^{\gamma}_{S_1}) \mid A_3]$. Assume that $(S_1,S_2,\tY_2) \in A_3$. Hence, $(S_1,S_2,\tY_2) \notin A_1\cup A_2$. Then, the mistakes of $h^{\gamma}_{S_1}$ over $S_2$ are uniformly distributed (with respect to the selection of $\gamma$). Assume by contradiction that $q_m := \textnormal{err}_{S_2}(h^{\gamma}_{S_1}) > (1+\alpha)~ p$ for some initialization $\gamma$. Then, since the mistakes of $h^{\gamma}_{S_1}$ over $S_2$ are uniformly distributed, $q_m = \textnormal{err}_{S_2}(h^{\gamma}_{S_1}) > (1+\alpha)~ p$ for all initializations $\gamma$. Therefore, we have 
\begin{equation*}
\E_{\hat{Y_2}}[\cd^{\epsilon}_{\min}(\cF,S_1\cup \hat{S}_2)] ~\leq~ \E_{\gamma}[\cd^{\epsilon}_{S_1}(h^{\gamma}_{S_1})] ~<~ \cd^{\epsilon}_{\min}(\cG,S_1\cup \tS_2),
\end{equation*}
where the first inequality follows from the definition of $\cd^{\epsilon}_{\min}(\cF,S_1\cup \hat{S}_2)$ and the second one by the assumption that $(S_1,S_2,\tY_2) \in A_3$. However, this inequality contradicts the fact that $(S_1,S_2,\tY_2) \notin A_1$. Therefore, we conclude that in this case, $q = \E_{\gamma}[\textnormal{err}_{S_2}(h^{\gamma}_{S_1})] \leq (1+\alpha)~p$ and $\E_{S_1,S_2,\tY_2}[\textnormal{err}_{S_2}(h_{S_1}) \mid A_3] \leq (1+\alpha)~p$.
\end{proof}

\begin{restatable}{proposition}{bound_sec}\label{prop:bound2} Let $m \in \mathbb{N}$, $p\in (0,1/2)$, $\alpha \in (0,1)$ and $\epsilon\in (0,1)$. Assume that the error of the learning algorithm is $\delta^1_m$-uniform. Let $S_1,S_2,S^i_1,S^i_2 \sim P_B(m)$ (for $i \in [k]$). Let $\tY^i_2=\{\ty_i\}^{m}_{i=1}$ be a set of labels that disagrees with $Y^i_2$ on uniformly selected $p m$ labels and $\tS^i_2$ is a relabeling of $S_2$ with the labels in $\tY^i_2$. Let $h^{\gamma}_{S_1}$ be the output of the learning algorithm given access to a dataset $S_1$ and initialization $\gamma$. Then, with probability at least $1-\delta$ over the selection of $\{(S^i_1,S^i_2,\tY^i_2)\}^{k}_{i=1}$, we have
\begin{equation*}
\begin{aligned}
\E_{S_1}\E_{\gamma}[\textnormal{err}_P(h^{\gamma}_{S_1})] ~\leq~
&\frac{1}{k}\sum^{k}_{i=1}\bI\left[\E_{\gamma}[\cd^{\epsilon}_{S^i_1}(h^{\gamma}_{S^i_1})] ~\geq~ \cd^{\epsilon}_{\min}(\cG,S^i_1\cup \tS^i_2)\right] \\
& + (1+\alpha)~ p + \delta^1_m + \delta^2_{m,p,\alpha} + \sqrt{\frac{\log(2/\delta)}{2k}}.
\end{aligned}
\end{equation*}
\end{restatable}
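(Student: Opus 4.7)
The statement is a high-probability, empirical-estimate version of Prop.~\ref{prop:bound}. The plan is to take Prop.~\ref{prop:bound} as a black box and replace the probability term $\P_{S_1,S_2,\tY_2}[\E_{\gamma}[\cd^{\epsilon}_{S_1}(h^{\gamma}_{S_1})] \geq \cd^{\epsilon}_{\min}(\cG,S_1\cup \tS_2)]$ with its Monte-Carlo estimate using the $k$ i.i.d.\ triplets $(S^i_1,S^i_2,\tY^i_2)$, paying the usual Hoeffding deviation term.

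\textbf{Step 1: Set up the i.i.d.\ indicators.} Define, for each $i\in[k]$,
\begin{equation*}
Z_i ~:=~ \bI\!\left[\E_{\gamma}[\cd^{\epsilon}_{S^i_1}(h^{\gamma}_{S^i_1})] ~\geq~ \cd^{\epsilon}_{\min}(\cG,S^i_1\cup \tS^i_2)\right].
\end{equation*}
Since the triplets $(S^i_1,S^i_2,\tY^i_2)$ are drawn i.i.d.\ from the same joint law as $(S_1,S_2,\tY_2)$ in Prop.~\ref{prop:bound}, the $Z_i$ are i.i.d.\ Bernoulli variables bounded in $[0,1]$, with mean
\begin{equation*}
\mu ~:=~ \E[Z_i] ~=~ \P_{S_1,S_2,\tY_2}\!\left[\E_{\gamma}[\cd^{\epsilon}_{S_1}(h^{\gamma}_{S_1})] ~\geq~ \cd^{\epsilon}_{\min}(\cG,S_1\cup \tS_2)\right].
\end{equation*}

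\textbf{Step 2: One-sided Hoeffding.} Apply the one-sided Hoeffding inequality (or the two-sided version with the $2/\delta$ inside the logarithm, as the statement suggests). For any $t>0$,
\begin{equation*}
\P\!\left[\mu ~>~ \frac{1}{k}\sum_{i=1}^{k} Z_i ~+~ t\right] ~\leq~ \exp(-2k t^2) ~\leq~ 2\exp(-2k t^2).
\end{equation*}
Choosing $t = \sqrt{\log(2/\delta)/(2k)}$ yields, with probability at least $1-\delta$ over the draw of $\{(S^i_1,S^i_2,\tY^i_2)\}_{i=1}^{k}$,
\begin{equation*}
\mu ~\leq~ \frac{1}{k}\sum_{i=1}^{k} Z_i ~+~ \sqrt{\frac{\log(2/\delta)}{2k}}.
\end{equation*}

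\textbf{Step 3: Combine with Prop.~\ref{prop:bound}.} Prop.~\ref{prop:bound} gives the deterministic bound
\begin{equation*}
\E_{S_1}\E_{\gamma}[\textnormal{err}_P(h^{\gamma}_{S_1})] ~\leq~ \mu + (1+\alpha)p + \delta^1_m + \delta^2_{m,p,\alpha}.
\end{equation*}
Plugging in the high-probability upper bound on $\mu$ from Step~2 produces exactly the stated inequality.

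\textbf{Main obstacle.} There is no real obstacle: the substantive work was already done in Prop.~\ref{prop:bound}, and Step~2 is a routine concentration argument for bounded i.i.d.\ indicators. The only point to verify carefully is that the $k$ triplets are genuinely i.i.d.\ and independent of the randomness of $\gamma$ used to define each $Z_i$ (which holds because each $Z_i$ depends only on $(S^i_1,S^i_2,\tY^i_2)$ after integrating over $\gamma$), so that Hoeffding applies without further conditioning gymnastics.
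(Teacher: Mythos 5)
Your proof is correct and follows essentially the same route as the paper's: invoke Prop.~\ref{prop:bound}, define the i.i.d.\ indicator variables, and apply Hoeffding's inequality to replace the population probability with its empirical average plus a $\sqrt{\log(2/\delta)/(2k)}$ deviation term. If anything, your explicit use of the one-sided inequality and the cleanly stated $\log(2/\delta)$ choice of threshold is slightly tidier than the paper's version, which writes $\sqrt{\log(1/2\delta)/2k}$ (an apparent typo for the same quantity).
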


\begin{proof}
By Prop.~\ref{prop:bound}, we have
\begin{equation*}
\begin{aligned}
\E_{S_1}\E_{\gamma}[\textnormal{err}_P(h^{\gamma}_{S_1})] ~\leq~& \P_{S_1,S_2,\tY_2}\left[\E_{\gamma}[\cd^{\epsilon}_{S_1}(h^{\gamma}_{S_1})] ~\geq~ \cd^{\epsilon}_{\min}(\cG,S_1\cup \tS_2)\right] \\
&+ (1+\alpha)~p_m + \delta^1_m + \delta^2_{m,p,\alpha}
\end{aligned}
\end{equation*}
We define i.i.d. random variables 
\begin{equation}
V_i ~=~ \bI\left[\E_{\gamma}[\cd^{\epsilon}_{S^i_1}(h^{\gamma}_{S^i_1})] ~\geq~ \cd^{\epsilon}_{\min}(\cG,S^i_1\cup \tS^i_2)\right].
\end{equation}
Therefore, we can rewrite,
\begin{equation}
\P_{S_1,S_2,\tY_2}\left[\E_{\gamma}[\cd^{\epsilon}_{S_1}(h^{\gamma}_{S_1})] ~\geq~ \cd^{\epsilon}_{\min}(\cG,S_1\cup \tS_2) \right] ~=~ \E[V_1]
\end{equation}
By Hoeffding's inequality,
\begin{equation}
\Pr\left[\left\vert k^{-1}\sum^{k}_{i=1}V_i - \E[V_1] \right\vert ~\geq~ \epsilon\right] ~\leq~ 2\exp(-2k\epsilon^2).
\end{equation}
By choosing $\epsilon = \sqrt{\log(1/2\delta)/2k}$, we obtain that with probability at least $1-\delta$, we have
\begin{equation}
\E[V_1] ~\leq~ \frac{1}{k}\sum^{k}_{i=1} V_i + \sqrt{\log(1/2\delta)/2k}.
\end{equation}
When combined with Prop.~\ref{prop:bound}, we obtain the desired bound.
\end{proof}

\bibliographystyle{abbrvnat}
\bibliography{neurips_2022}